\documentclass[journal]{IEEEtran}

\usepackage{multirow}
\usepackage[table]{xcolor}
\usepackage{amsmath}
\usepackage{gensymb}
\usepackage{amsthm}
\usepackage{verbatim}
\usepackage{subfigure,graphicx}
\usepackage[outdir=./Figs/]{epstopdf}
\usepackage{amssymb,bm,color}
\usepackage[citecolor=red,urlcolor=red]{hyperref}
\newtheorem{remark}{Remark}
\newtheorem{theorem}{Theorem}
\newtheorem{assumption}{Assumption}
\usepackage[capitalize]{cleveref}
\hypersetup{
	colorlinks=true,
	linkcolor=blue,
	filecolor=red,      
	urlcolor=blue,
	citecolor=red,
}
\usepackage{arydshln} 
\usepackage{booktabs}
\usepackage[table]{xcolor}
\usepackage{mathrsfs}
\usepackage{algpseudocode}
\usepackage{algorithm}
\usepackage{mathtools, cuted}
\usepackage{lipsum, color}
\usepackage{multicol}
\usepackage{makecell}
\usepackage{booktabs}
\usepackage{cite}
\usepackage{diagbox}
\usepackage[table ]{ xcolor}
\usepackage{svg} 


\newtheorem{lemma}{Lemma}

\newtheorem{defn}{Definition}

\newcommand{\T}{^\top}
\newcommand{\C}{\mathcal}

\usepackage{soul}
\usepackage{soul}
\usepackage{amsmath}
\usepackage[table ]{ xcolor}
\usepackage{stfloats}
\soulregister\cite7 %
\soulregister\citep7 %
\soulregister\citet7 %
\soulregister\ref7 %
\soulregister\pageref7 %
\graphicspath{{Figs/}}
\usepackage{graphicx} 
\usepackage{epstopdf}
\usepackage{float}  %
\usepackage{subfigure}  %
\usepackage{threeparttable}
\usepackage{svg}
\usepackage{orcidlink}
\usepackage{cases}
\usepackage[table]{xcolor}
\usepackage{adjustbox}

\begin{document}
	
	\title{%

%
%
%
%
		
		Aerial GRIPPER: A Gradient-based Real-time Inverse-game Predictor and Planner
		
	}
	
	\author{Zeshuai Chen, Meng Wang, Jindou Jia, Xiang Yu$^{*}$, and Lei Guo,~\IEEEmembership{Fellow,~IEEE}
		
		\thanks{This work was supported in part by the National Natural Science Foundation of China (Grant Numbers 62425302, 62595803, 62388101), in part by Fundamental and Interdisciplinary Disciplines Breakthrough Plan of the Ministry of Education of China (Grant Number JYB2025XDXM206), in part by "Pioneer" and "Leading Goose" R\&D Program of Zhejiang (Grant Number 2025C01053). (\emph{Corresponding author: Xiang Yu}.)}
		
		\thanks{Zeshuai Chen, Meng Wang, Xiang Yu, and Lei Guo are with School of Automation Science and Electrical Engineering, Beihang University, 100191, Beijing, China (e-mail: mwangbuaa@126.com; zschenbuaa@126.com; xiangyu$\_$buaa@buaa.edu.cn; lguo@buaa.edu.cn).}
		
		\thanks{Jindou Jia is with School of Mechanical and Aerospace Engineering, Nanyang Technological University, 639798, Singapore (e-mail: jindou.jia@ntu.edu.sg).}
		
	}
	
	\maketitle
	
	\IEEEpubid{%
		\begin{minipage}{\textwidth}
			\raggedright
			\color{gray}
			\fontsize{6.5pt}{7.8pt}\selectfont
			\copyright~2026 IEEE.  Personal use of this material is permitted.  Permission from IEEE must be obtained for all other uses, in any current or future media, including reprinting/republishing this material for advertising or promotional purposes, creating new collective works, for resale or redistribution to servers or lists, or reuse of any copyrighted component of this work in other works.
		\end{minipage}
	}

	\begin{abstract}
		
		Accurate capture of non-cooperative targets is critical. In an attempt to tackle this intractable challenge, an aerial gripper system integrated with a Gradient-based Real-time Inverse-game Predictor and PlannER (GRIPPER)  framework is proposed. The interaction is formulated as a general-sum pursuit-evasion game under incomplete information. Specifically, underlying cost parameters of the target are inferred online, and the open-loop Nash equilibrium (OLNE) strategy is iteratively refined within a receding-horizon loop. To ensure high-frequency execution, a computationally friendly gradient-based inverse-game solver is developed. Without explicit computation of the Hessian inverse, the optimized solution is updated ($>$ 50 Hz) based on implicit differentiation and fast Hessian-vector products. Meanwhile, an anti-disturbance controller is developed to overcome disturbances of uncertain payload and gripper actuation, enabling precise tracking of the planned trajectory and accurate grasping of the target. Simulations and real-world experiments illustrate the superior computational efficiency and task performance of GRIPPER. The task of capturing and delivering a non-cooperative target is accomplished, highlighting the robustness, adaptability, and real-time performance of the framework in highly adversarial scenarios.
		
	\end{abstract}

	\begin{IEEEkeywords}
		Aerial capture, non-cooperative target, game-theoretic planning, inverse game solver
	\end{IEEEkeywords}
	
	\IEEEpubidadjcol
	
	\section{Introduction}
	
	\IEEEPARstart{A}{erial} manipulators, composed of a multi-rotor unmanned aerial vehicle (UAV) and a dexterous manipulator, have revolutionized the ability of aerial vehicles to interact dynamically with the environment\cite{bonyankhamsehAerialManipulationLiterature2018a, olleroPresentFutureAerial2022, liuCompactAerialManipulator2024}. Compared to ground mobile manipulators, aerial manipulators can cross areas where humans or ground robots struggle, greatly expanding the scope of interactive tasks. The combination of manipulability and maneuverability enables aerial manipulators to tackle more complex interactive tasks, such as object grasping\cite{wangPreciseEndeffectorControl2025}, pipeline inspection\cite{bodieActiveInteractionForce2021a}, door opening\cite{leeAerialManipulationUsing2020}, and peg-in-hole tasks\cite{wangMillimeterlevelPickPeginhole2024}.

	Few studies have demonstrated the capability of dynamically grasping non-cooperative objects. Different from static object grasping\cite{wangMillimeterlevelPickPeginhole2024, byunHybridControllerEnhancing2024, mengDesignImplementationRotor2016}, the capture of non-cooperative targets poses significant challenges due to the maneuverability, uncertainty, and adversarial evasion of the target. The composition of real-time trajectory prediction and precise target capture lies beyond the scope of traditional motion planning paradigms. Biological systems exhibit the ability to adapt their own behaviors based on predictions of an opponent’s actions. For instance, bees perform only simple target tracking during foraging. In contrast, dragonflies can anticipate the future trajectory of flying insects and execute capture maneuvers with millimeter-level accuracy\cite{lancerPreattentiveFacilitationTarget2022, mischiatiInternalModelsDirect2015}. Such biological evidence highlights the importance of integrating real-time intent inference with dynamic trajectory planning in robotic aerial manipulations.

	Most of the existing studies focus on offline trajectory optimization\cite{luoTimeoptimalHandoverTrajectory2023, ubellackerHighspeedAerialGrasping2024, chenOnlineTrajectoryGeneration2023}. For example, in \cite{luoTimeoptimalHandoverTrajectory2023}, a moving object along a known predefined trajectory is captured by the aerial manipulator. However, such a known prior assumption may no longer be feasible, due to the fact that non-cooperative objects will actively deviate from their predefined trajectory in response to the capture of aerial manipulator. Additionally, the challenging capture task is simplified by decoupling the prediction and planning phases\cite{ubellackerHighspeedAerialGrasping2024, chenOnlineTrajectoryGeneration2023}. The motion of the non-cooperative target is estimated and subsequently used as an input to the planning module. Nevertheless, the formulation explicitly neglects the complexity induced by the non-cooperative nature. Consequently, these methods cannot adapt in real time to the spontaneous movements of the target, limiting their applicability in dynamic scenarios.
	
	\IEEEpubidadjcol
	
	An additional challenge stems from the incomplete information regarding strategic intent of the non-cooperative target. In highly adversarial scenarios, it is nontrivial to infer the hidden parameters governing its evasion strategy, particularly its real-time trade-offs between energy efficiency, path adherence, and escape urgency. Although the state information can be estimated through probabilistic filters\cite{ammounRealTimeTrajectory2009, jinSwitchedKalmanfilter2015} and interactive multiple models (IMM) filters\cite{kaempchenIMMObjectTracking2004, lefkopoulosInteractionawareMotionPrediction2021}, the motion intent of non-cooperative objects remains strategically un-inferable. Recent advances in game-theoretic planning\cite{zhuangPursuitevasionGameMultiple2023, xuMultiplayerPursuitevasionDifferential2022, wangGametheoreticPlanningSelfdriving2021, spicaRealtimeGameTheoretic2020, maHierarchicalReinforcementLearning2024, linigerNoncooperativeGameApproach2020, liGamebasedApproximateOptimal2023} offer a viable means to model such interactions. Players in the game are assumed to act rationally, with each player seeking to optimize their individual objectives. Nevertheless, a strict premise requiring full knowledge of the target’s cost function must be held. Furthermore, traditional game theory methods, such as the Hamilton-Jacobi-Isaacs (HJI) and Hamilton-Jacobi-Bellman (HJB) approaches\cite{kimHamiltonJacobiBellman2000, kunduPolicyIterationHamilton2024, liOptimalStrategiesPursuitevasion2023, rutquistSolvingHamiltonjacobibellmanEquation2014, tassaLeastSquaresSolutions2007}, are incapable of dealing with problems involving incomplete information. Therefore, a unified framework is required to simultaneously estimate latent objectives of the target and optimize pursuit strategies on-the-fly.
	
	Lastly, unknown disturbances can severely affect the capture performance, particularly for underactuated UAVs. Numerous studies are presented to achieve the high-precision performance of the end-effector, such as structure design\cite{zhangAerialAdditiveManufacturing2022, dengWholebodyIntegratedMotion2025} and precise aerial platform control\cite{liuDDPGbasedAdaptiveRobust2022, zhangRobustControlAerial2020}. However, unlike static scenarios, the window phase of dynamic capture is extremely limited. Unknown payload and coupling disturbances between the UAV platform and the gripper can pose a negative impact on the capture task. Disturbances encountered during the task must be handled promptly to ensure timely and high-precision capture.
	
	\begin{figure*}
		\setlength{\abovecaptionskip}{-0.3cm}
		\centering{\includegraphics[width=1\textwidth]{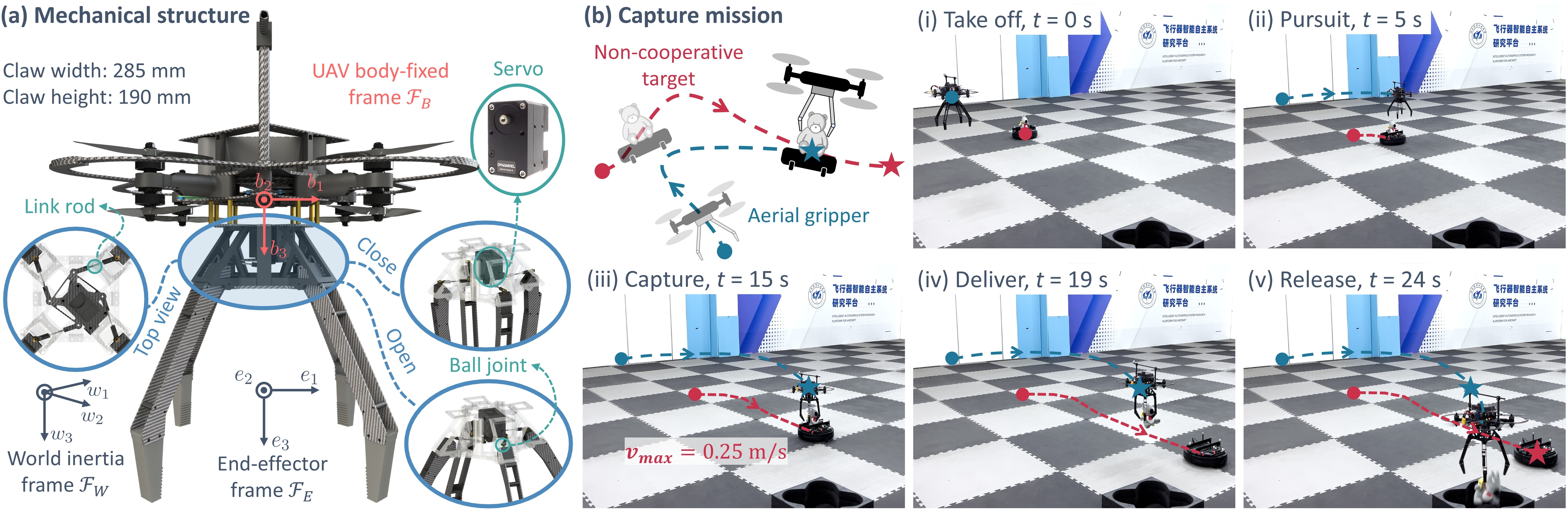}}
		\caption{A custom-made aerial gripper is used to capture a non-cooperative target. \textbf{(a)} Coordinate definition and mechanical structure of the proposed aerial gripper. \textbf{(b)} Illustration and key snapshots of the non-cooperative target capture task. The experimental video can be found in the attached file.}
		\label{Scenario1}
	\end{figure*}
	
	In order to address aforementioned challenges, a \textbf{G}radient-based \textbf{R}eal-time \textbf{I}nverse-game \textbf{P}redictor and \textbf{P}lann\textbf{ER} (GRIPPER) is presented in this work. The centimeter-level trajectory prediction accuracy and real-time dynamic path planning are achieved by developing an inverse-game solver. Furthermore, a fixed-time sliding mode observer (SMO) based controller is adopted to guarantee accurate tracking of the planned trajectory. The main contributions are summarized as follows.
	
	\begin{enumerate}
		
		\item An aerial gripper system integrated with an inverse-game framework is proposed for capturing non-cooperative moving targets. The framework enables online inference of the target intent, prediction of its future trajectory, and computation of  Nash equilibrium strategy. As a result, the system can proactively account for evasive maneuvers and successfully accomplish the challenging task of aerial grasping of highly dynamic targets.
		
		\item To bridge the gap between complex game-theoretic reasoning and real-time requirements of computationally constrained tasks, a lightweight inverse-game solver is proposed. By leveraging implicit differentiation and fast Hessian-vector products (HVPs), the gradient of the upper-level optimization objective with respect to the lower-level parameters is efficiently computed. In contrast to prior works\cite{petersOnlineOfflineLearning2023, xiaoLearningMultipursuitEvasion2024}, the proposed method achieves superior computational efficiency and adaptability for non-cooperative target capture tasks.
		
		\item With respect to payload variation and gripper actuation, a fixed-time SMO based controller is designed to handle composite disturbances. In comparison to disturbance observer (DO) based controller\cite{chenNonlinearDisturbanceObserver2000}, the presented controller guarantees rapid disturbance rejection within a fixed convergence time. This design is particularly critical in short capture window phase, where real-time disturbance mitigation is indispensable to maintain high-precision tracking.
		
	\end{enumerate}
	
	A set of comparative simulations are conducted to validate the superior performance of the proposed framework. Furthermore, a real-world non-cooperative target capture and delivery task is accomplished, demonstrating the real-time performance and task capability of the system.
	
	The rest of the paper is organized as follows. Related work is depicted in Section~\ref{sec:Related Work}. The incomplete information dynamic game framework and the dynamics of the aerial gripper are formalized in Section~\ref{sec:Preliminaries and Problem Formulation}. Section~\ref{sec:Game-Based Trajectory Planning and Prediction} goes into details on the proposed framework. The control strategy of the aerial gripper is introduced in Section~\ref{sec:ControllerDesign}. Comparative simulation and experimental results are presented in Section~\ref{sec:Simulation and Experiments}. Section~\ref{sec:Conclusion} concludes the study. Finally, potential directions for future work are discussed in Section~\ref{sec:discussion}.

	\section{Related Work}
	\label{sec:Related Work} 
	
	This section reviews the relative studies, including game-theoretic path planning, inverse game and bilevel optimization, trajectory prediction of the non-cooperative target, and precise control for aerial manipulator.
	
	\subsection{Game-Theoretic Planning and Pursuit-Evasion Game}
	
	Dynamic games have become a prevalent modeling tool in various applications, such as autonomous driving\cite{cleachALGAMESFastSolver2020, fridovich-keilEfficientIterativeLinearquadratic2020} and competitive vehicle racing\cite{spicaRealtimeGameTheoretic2020, wangGametheoreticPlanningSelfdriving2021}. The Nash equilibrium assumes no hierarchy among players, where the strategy of each player is the best response to that of others. This equilibrium framework has been extensively explored\cite{fridovich-keilEfficientIterativeLinearquadratic2020, spicaRealtimeGameTheoretic2020, chenCooperativeControlPower2015} to reveal strategic interactions in non-cooperative multi-agent systems. The game-theoretic based planners are applied for car and quadrotor racing under different scenarios in \cite{linigerNoncooperativeGameApproach2020, wangGametheoreticPlanningSelfdriving2021, spicaRealtimeGameTheoretic2020}. Results illustrate that the approach outperforms model predictive control (MPC) algorithm in competitive settings.
	
	Inspired by predator-prey interactions in nature, the non-cooperative target capture problem is typically modeled as  a pursuit-evasion game. In such games, a pursuer aims to capture an evader who attempts to avoid being captured. In\cite{imadoMethodSolveMissileaircraft2005}, a differential game approach is proposed to model the PE engagement between a missile and an aircraft. Building upon similar foundations, the optimal control strategy for fixed-time zero-sum PE game is investigated to achieve effective satellite interception\cite{zhangFixedtimeZerosumPursuit2024}. Moreover, a deep reinforcement learning method is leveraged to train agents in PE game, facilitating robust UAV navigation under adversarial pursuit \cite{xiaoLearningMultipursuitEvasion2024}. However, the strong assumption is required that the behavior can be modeled as zero-sum games, overlooking the diversity and variability in the behavioral patterns of opponents.
	
	In our work, the PE game is reformulated as a general-sum game, with an open-loop Nash equilibrium solver developed to generate the desired trajectory for the aerial gripper. Unlike previous works where the objective function of each agent is known\cite{spicaRealtimeGameTheoretic2020, wangGametheoreticPlanningSelfdriving2021, linigerNoncooperativeGameApproach2020, xiaoLearningMultipursuitEvasion2024}, the online estimation of the objective functions of other agents is achieved in the presented framework. The ability to operate in real-world scenarios can be improved, especially when the objectives and intentions of non-cooperative agents are not explicitly available.
	
	\subsection{Inverse Game and Bilevel Optimization}
	
	The purpose of inverse-game theory is to infer the underlying cost functions and strategic preferences of agents based on observed behaviors. Typically, inverse games are solved via Bayesian inference \cite{lecleachLUCIDGamesOnlineUnscented2021}, KKT residual minimization \cite{awasthiInverseDifferentialGames2020}, and  equilibrium-constrained maximum likelihood \cite{petersOnlineOfflineLearning2023}. However, these methods struggle with non-convex or implicit constraints, which are sensitive to noise and inaccurate equilibrium  assumptions.
	
	Recently, bilevel optimization (BLO) provides a natural framework to solve the inverse game, where the upper level estimates agent parameters while the lower level solves for game equilibrium. Classical approaches convert bilevel problems into single-level ones \cite{petersOnlineOfflineLearning2023}. However, the reformulation depends on the KKT condition, leading to a large number of additional variables and constraints. An alternative route is the utilization of gradient-based BLO  \cite{hanLearningPlanManeuverable2024, landryDifferentiableBilevelOptimization2021}. While the pioneering work \cite{liuLearningPlayTrajectory2023} applies the implicit function theorem (IFT) to solve inverse games, it relies on explicit dense matrix inversions that incur prohibitive computational costs. A cost inference framework for feedback dynamic games from noisy partial state observations and incomplete trajectories is developed in \cite{liCostInferenceFeedback2023}. However, the formulation focuses on offline inference from fixed trajectory data.
	
	In our work, the implicit gradient is computed via an iterative approach with fast HVP, resulting in a lightweight and highly efficient solution. This approach enables high-frequency online adaptation required for highly dynamic and time-critical tasks like the aerial grasping of non-cooperative targets.
	
	\subsection{Trajectory Prediction of Non-cooperative Target}
	
	Trajectory prediction has emerged as a critical component in autonomous systems, encompassing a wide range of applications from autonomous driving to robotic manipulation. Trajectory prediction methods can be broadly categorized into data-driven and model-based approaches\cite{huangSurveyTrajectorypredictionMethods2022, korbmacherReviewPedestrianTrajectory2022, yangCVaRConstrainedSafetyCooperation2025}. Data-driven methods, including long short-term memory network (LSTM)\cite{zynerRecurrentNeuralNetwork2018}, generative adversarial networks (GAN)\cite{hegdeVehicleTrajectoryPrediction2020}, and Transformers\cite{giuliariTransformerNetworksTrajectory2021}, have attracted substantial research attention due to their ability to model complex interactions between agents. Model-based methods, such as physics-driven models, offer more interpretable predictions \cite{wangVehicleTrajectoryPrediction2020}. However, these methods suffer from high computational costs and heavily rely on passive learning from extensive pre-collected datasets rather than active adaptation.
	
	To overcome the reliance on extensive offline datasets and passive pattern matching, our framework formulates trajectory prediction as an interactive forward game. The real-time adaption with limited prior data is achieved by incorporating interactive intent reasoning between agents.
	
	\subsection{Precise Control for Aerial Manipulation}
	
	Focusing on the challenging task of capturing non-cooperative object, the guarantee of accurate end-effector tracking performance is of importance. Unknown payload and coupling disturbances between the UAV platform and the gripper can severely affect the capture performance of the aerial gripper. In \cite{wangMillimeterlevelPickPeginhole2024}, a learning based disturbance rejection approach is proposed. The reference trajectory of the manipulator is replanned to directly counteract the floating disturbance of the UAV platform, thereby enabling  precise control of the end-effector. To deal with the dynamic uncertainties resulting from the quadrotor, robotic arm, and payload, an adaptive control scheme is presented in \cite{liuDDPGbasedAdaptiveRobust2022} to allow the aerial manipulator to pick objects successfully. However, the aerial platform is required to maintain a quasi-static motion \cite{wangMillimeterlevelPickPeginhole2024, liuDDPGbasedAdaptiveRobust2022}.
	
	In an attempt to dynamically capture the non-cooperative target, a fixed-time SMO is employed to achieve rapid and robust compensation. The involvement of SMO significantly improves the capture success rate.

	\section{Preliminaries and Problem Formulation}
	\label{sec:Preliminaries and Problem Formulation}
	
	In this section, the mechanical architecture of the proposed aerial gripper system is presented. Subsequently, the fundamental components of the non-cooperative target capture problem are introduced, including the game-theoretic framework for PE interactions and dynamics of the aerial gripper.
	
	\subsection{Problem Description and System Overview}
	\label{subsec:system_overview}
	
	\subsubsection{Problem Description} 
	The primary objective is to maneuver an aerial gripper to capture a moving target within a highly dynamic environment. The challenge is twofold: Firstly, the target is actively evading, and its strategic intent is unknown to the pursuer. Online trajectory planning coupled with intent inference is required to be achieved. Secondly, unknown disturbances during the task should be handled promptly for timely and high-precision capture.
	
	\subsubsection{System Overview} 
	
	Fig. \ref{Framework} presents the proposed hierarchical planning and control framework. The architecture consists of two main modules:
	
	\begin{itemize}
		
		\item \textbf{Game-theoretic planner:} The interaction is modeled as an incomplete information trajectory game. By adopting an inverse-game solver, the intent and trajectory of the target is inferred and predicted online, respectively. The Nash equilibrium is simultaneously computed in a receding-horizon loop to generate proactive feasible reference trajectories for the aerial gripper.
		
		\item \textbf{Anti-disturbance controller:} A fixed-time SMO based controller is developed to guarantee precise end-effector tracking. By resorting to the observer, unknown disturbances are rapidly estimated and rejected. Thus, the reference trajectory generated by the game-theoretic planner is executed precisely.
		
	\end{itemize}
	
	\subsection{Design of Aerial Gripper}
	
	A variety of aerial grippers with different grasping mechanisms have been developed to achieve different tasks, such as suction cups\cite{kremerTRIGGERLightweightUniversal2023}, magnets\cite{luoTimeoptimalHandoverTrajectory2023}, soft fingers\cite{ubellackerHighspeedAerialGrasping2024}, and bio-inspired grippers\cite{yangBistableSoftGripper2025, roderickBirdinspiredDynamicGrasping2021}. However, due to the confined size and payload, such grippers often suffer from limited grasping radius and flexibility. To expand manipulation capabilities, more complex aerial manipulators are developed, including serial robotic arms\cite{wangMillimeterlevelPickPeginhole2024} and parallel manipulators\cite{yangWholebodyRealtimeMotion2021}. Additionally, various morphable UAVs have emerged \cite{wuRingrotorNovelRetractable2023, xuBiomimeticMorphingQuadrotor2024, buckiDesignControlMidairreconfigurable2023}. Nevertheless, such improvements in versatility are typically accompanied by increased system complexity, particularly in planning and control. In practice, the optimal choice of gripper depends heavily on the specific application. Given most small-scale UAVs with limited  payloads, lightweight and low-complexity grasping mechanisms remain practical and effective options\cite{mellingerDesignModelingEstimation2011}.
	
	The developed aerial gripper is composed of an X-shaped coaxial multi-rotor UAV and a single DoF four-finger gripper, as illustrated in Fig. \ref{Scenario1}(a). The compact coaxial rotor configuration offers a high thrust-to-weight ratio, which enhances precision and dexterity to achieve higher payload capacity.
	
	The gripper, integrated at the center underside of the UAV, features a novel design. It is actuated by a longitudinally oriented servo actuator with precise angular position control. As illustrated in Fig. \ref{Scenario1}(a), the servo drives four symmetrically arranged link rods connected to the upper end of the fingers via ball joints. The rotational motion of the servo can be translated into a controlled opening and closing of the end-effector through a leverage effect. A distinguishing characteristic of this configuration is the multiple-purpose of the gripper, which incorporates both manipulation and landing functionalities within a unified mechanical architecture. Conventional designs with separate landing supports would induce spatial constraints and potential collision risk during precise manipulation\cite{wangMillimeterlevelPickPeginhole2024, leeCustomizablePowerGrasps2024, yangBistableSoftGripper2025, linUltralightUltrafastPassive2024}. The proposed design mitigates such limitations, enhancing operational flexibility and extending the effective workspace of the aerial gripper. Additionally, the absence of separated landing support facilitates a larger gripper design, thereby increasing the grasping diameter.
	
	Overall, the structural design of the aerial gripper provides a practical balance between maneuverability and manipulation capability. Detailed parameters of the aerial gripper are listed in Table \ref{GripperParameters}.
	
	\begin{table}[!t]
		\centering
		\renewcommand{\arraystretch}{1.2}
		\caption{Physical parameters of the aerial gripper}
		\begin{tabular}{cc}
			\hline
			Parameter &  Value  \\
			\hline 
			UAV mass & 2.30 kg\\
			Gripper mass & 0.37 kg\\
			Maximum payload & 2.50 kg\\
			Minimum grasping diameter & 1.0 cm\\
			Maximum grasping diameter & 28.5 cm\\
			Minimum servo angle & 0$^{\circ}$\\
			Maximum servo angle & 37$^{\circ}$\\
			\hline
		\end{tabular}
		\label{GripperParameters}
	\end{table}
	
	\subsection{Forward and Inverse Game Formulation}
	
	The non-cooperative target capture problem is a PE game with incomplete information. The aerial gripper attempts to capture the non-cooperative target, while the non-cooperative target tries to evade the pursuit while advancing toward a fixed goal position.
	
	Consider a 2-player discrete-time general-sum trajectory game with horizon of $t\in \{1, \cdots, N\}$. The players are denoted by $i \in \{\C{G, T}\}$, where $\C{G}$ and $\C{T}$ represent the pursuer and the evader, respectively. The game consists of following essential components.
	
	\subsubsection{System dynamics}
	
	Given the control input $u_i \in \mathbb{R}^{m_i}$ for each player $i$, the state $x_i \in \mathbb{R}^{n_i}$ evolves according to the the following difference equation:
	\begin{equation}
		x_{i, t+1} = f( x_{i, t}, u_{i, t}).
		\label{GameDynamics}
	\end{equation}
	
	For clarity, denote the joint state and the joint input as
	\begin{equation}
		\begin{aligned}
			&X_i = \begin{bmatrix} x_{i, 1}^{\top} & \cdots & x_{i, N}^{\top} \end{bmatrix}^{\top} \in \mathbb{R}^{N n_i\times 1},\\
			&U_i = \begin{bmatrix} u_{i, 1}^{\top} & \cdots & u_{i, N}^{\top} \end{bmatrix}^{\top} \in \mathbb{R}^{N m_i\times 1}.
		\end{aligned}
	\end{equation}
	
	\subsubsection{Objective function}

	The objective function of player $i$ is encoded by the distinct cost function $\C{L}_i:\mathbb{R}^{(m_i+n_i)\times N}\rightarrow \mathbb{R}$, which in general depends upon the sequence of states and inputs for all players. In this article, the objectives are expressed in time-additive form, represented by a weighted sum of predefined basis functions. This formulation is common across optimal control and reinforcement learning\cite{englertInverseKKTLearning2017, lecleachLUCIDGamesOnlineUnscented2021, liuLearningPlayTrajectory2023, ziebartMaximumEntropyInverse2008}, where such parametric representations enable tractable optimization and policy learning.
	\begin{equation}
		\begin{aligned}
			&\C{L}_i(X_\C{G}, X_\C{T}, U_\C{G}, U_\C{T}; \lambda_i)\\
			=&\sum_{t=1}^{N} \lambda_i^{\top} \Phi_{i, t}(x_{\C{G}, t}, x_{\C{T}, t}, u_{\C{G}, t}, u_{\C{T}, t}),
		\end{aligned}
	\end{equation}
	where $\Phi_{i, t} \in \mathbb{R}^{\C{N}_i}$ are basis functions, $\lambda_i \in \mathbb{R}^{\C{N}_i}$ are positive weights for each cost component, and $\C{N}_i$ is the number of basis functions.
	
	\subsubsection{Nash equilibrium}
	
	Combining above components, each player $i$ in a dynamic game seeks to solve the following optimization problem: for $\forall i \in \{\C{G,T}\}$
	\begin{equation}
		\begin{array}{*{20}{c}}
			{\mathop {\min }\limits_{X_i,U_i} }&\C{L}_i(X_\C{G}, X_\C{T}, U_\C{G}, U_\C{T}; \lambda_i),\\
			{{\rm{s}}{\rm{.t}}{\rm{.}}}&x_{i,t + 1} = f(x_{i,t}, u_{i,t}).
		\end{array}
	\end{equation}
	
	\begin{remark}
		In this work, an unconstrained optimization structure is employed to guarantee computational efficiency and solvability. Relaxing hard constraints into smooth soft penalties within the costs is a widely adopted and practical approach in the field of trajectory planning \cite{wangWindDriverFrameworkHighManeuverability2025, zhouEGOPlannerESDFFreeGradientBased2021, zhouSwarmMicroFlying2022}.
	\end{remark}
	
	In this problem, the feasible set for each player depends on the decision variables of the rival player, which results in an Nash equilibrium problem. Mathematically, the Nash equilibrium is defined as follows.
	
	\begin{defn}[Nash Equilibrium]
		For a 2-player dynamic game, a set of strategies $U_i^*$ with the corresponding states $X_i^*$ constitutes a solution of Nash equilibrium problem if the following inequalities hold
		\begin{equation}
			\begin{aligned}
				&\C{L}_i^* = \C{L}_i(X_i^*, X_j^*, U_i^*, U_j^*) \le \C{L}_i(X_i, X_j^*, U_i, U_j^*),\\
				&\left(X_i, U_i\right) \ne \left(X_i^*, U_i^*\right), \quad i\ne j,\quad \forall i,j\in \{\C{G,T}\},
			\end{aligned}			
		\end{equation}
		which imply that at a Nash equilibrium point, no player can reduce its cost by changing its strategy unilaterally \cite{basarDynamicNoncooperativeGame1998}. 
	\end{defn}
	
	Denote the Nash equilibrium as a mapping of two players' objectives
	\begin{equation}
		(X_i^*, U_i^*) = NE(\C{L_G}, \C{L_T}).
	\end{equation}
	
	\begin{figure*}[!t]
		\centering{\includegraphics[width=0.75\textwidth]{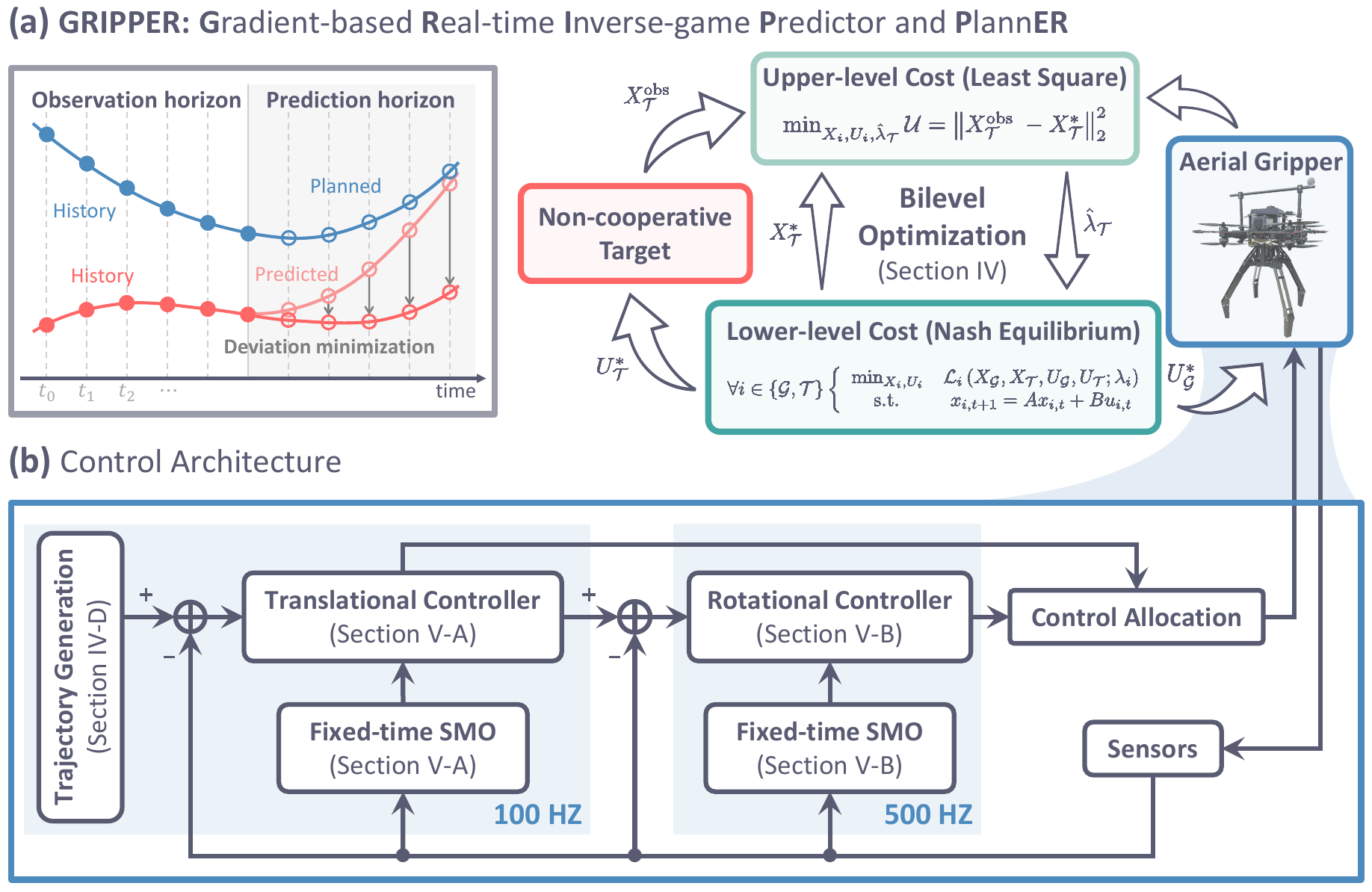}}
		\caption{Proposed game-theoretic planning algorithm and control scheme. \textbf{(a)} Framework of GRIPPER. The incomplete information PE game is formulated as a bilevel optimization framework. The lower-level optimization solves for the Nash equilibrium. The upper level optimization focuses on the estimation of unknown cost weights by minimizing the deviation between the actual trajectory and the Nash trajectory of the non-cooperative target. \textbf{(b)} Control diagram of the aerial gripper. A PD controller is adopted for position control, and a geometric controller is adopted for attitude control. Two fixed-time SMOs are employed to compensate for disturbances in the translational and rotational loops, respectively.}
		\label{Framework}
	\end{figure*}
	
	\subsubsection{Inverse game formulation}
	
	Subsequently, the inverse Nash problem is formulated as one of online parameter estimation problem, with objective functions parameterized by initially unknown values $\lambda_{\C{T}}$. The aerial gripper shall address the non-cooperative game from an ego-centric perspective without prior knowledge of the target objective.
	
	Inspired by \cite{petersOnlineOfflineLearning2023}, a least-square formulation is employed to minimize the deviation between the observed trajectory $X_\C{T}^\text{obs}$ and the predicted Nash equilibrium trajectory $X_{\C{T}}^{*}$. The inverse Nash problem is thus formulated as a bilevel optimization problem. The upper-level optimization focuses on parameter estimation, while the lower-level optimization corresponds to the Nash equilibrium of the PE game. Specifically, the formulation can be expressed as	
	\begin{equation}
		\begin{array}{*{20}{c}}
			{\mathop {\min}\limits_{X_i,U_i,\hat\lambda_{\C{T}}} }&\C{U}=\Vert X_{\C{T}}^{\text{obs}} - X_{\C{T}}^{*} \Vert^{2}_{2},\\
			{{\rm{s}}{\rm{.t}}{\rm{.}}}&
			(X_i^*, U_i^*) = NE(\C{L_G}, \C{L_T}(\hat\lambda_\C{T})),
		\end{array}
	\end{equation}
	where $\hat\lambda_\C{T}$ is the estimation of $\lambda_\C{T}$.
	
	\begin{assumption}
		\label{assump_target}
		The non-cooperative target behaves as a rational decision-maker that computes a Nash equilibrium strategy given full knowledge of the game structure. Furthermore, the target possesses complete knowledge of the objective function of the pursuer.
	\end{assumption}
	
	\begin{remark}
		In Assumption \ref{assump_target}, the non-cooperative target is modeled as a rational and fully informed adversary. Theoretically, the assumption isolates incomplete information to the pursuer side and guarantees equilibrium consistency of the observed trajectories. As a result, the inverse-game inference problem becomes well-posed. In real-world scenarios where agents exhibit bounded rationality or modeling errors, the inferred parameters should be interpreted as a best-fit rational approximation. Notably, sub-optimal evasion by the target generally renders the capture task easier, as the target fails to fully exploit the pursuer’s strategy.
	\end{remark}
	
	\subsection{Aerial Capture Game Formulation}
	
	\subsubsection{System dynamics}
	
	In general, a UAV is modeled as a rigid body actuated by motors, as described in Eq. (\ref{dynamic_model}). However, instead of capturing full real-world dynamics, a simplified point-mass model is adopted for both the aerial gripper and the non-cooperative target. This approximation enables faster optimization and gradient computation. Accordingly, the system dynamics are formulated using a point-mass model, where motion is described by numerical integration of acceleration and velocity in Cartesian space:
	\begin{equation}
		x_{i, t+1} = f(x_{i, t}, u_{i, t}) = A x_{i, t} +  B u_{i, t},
		\label{GameDynamics_AG}
	\end{equation}
	\begin{equation}
		A = \begin{bmatrix}
			\mathbf{I}_{3\times 3} & \mathbf{I}_{3\times 3}\Delta t \\
			\bm{0}_{3\times 3} & \mathbf{I}_{3\times 3}
		\end{bmatrix},
		B = \begin{bmatrix}
			\bm{0}_{3\times 3} \\
			\mathbf{I}_{3\times 3}\Delta t 
		\end{bmatrix},
	\end{equation}
	where $x_i = \begin{bmatrix} p_i^{\top} & v_i^{\top} \end{bmatrix}^{\top} \in \mathbb{R}^{6}$, with $p_i \in \mathbb{R}^3$ and $v_i = \dot{p}_i \in \mathbb{R}^3$ represent the position and velocity of player $i$ in world frame $\C{F}_W$, respectively. Specifically, $p_\C{G}$ should be chosen as the position of the end-effector $p_e$, which is defined in Section \ref{system_model}. The control input is $u_i = a_i = \ddot{p}_i \in \mathbb{R}^{3}$, which represents the acceleration. $\Delta t$ is the control period. $\mathbf{I}$ and $\bm{0}$ represent the identity matrix and the zero matrix, respectively. 
	
	\subsubsection{Objective function}
	
	The objective function for both players mainly consists of the following components:
	
	\begin{enumerate}[]
		
		\item[a.] Pursuit and evasion
		
		During the capture, the aerial gripper minimizes the Euclidean distance to approach the target, while the target maximizes this distance to evade. The adversarial interaction defines the core pursuit-evasion dynamics, which we formalize through the following cost term
		\begin{equation}
			\begin{aligned}
				\Phi_{\C{G},t}^{\text{pur}} &= \Vert p_{\C{G},t} - p_{\C{T},t} \Vert^{2}_{2},\\
				\Phi_{\C{T},t}^{\text{eva}} &= -\Vert p_{\C{G},t} - p_{\C{T},t} \Vert^{2}_{2}.
			\end{aligned}
		\end{equation}
		
		\item[b.] Goal position
		
		The non-cooperative target has a fixed goal point $p_{\C{T}, \text{goal}}\in \mathbb{R}^3$
		\begin{equation}
			\Phi_{\C{T},t}^{\text{goal}} = \Vert p_{\C{T},t} - p_{\C{T},\text{goal}} \Vert^{2}_{2}.
		\end{equation}
		
		\item[c.] Kinematics limitation
		
		To ensure feasible motion, kinematic penalties are imposed on the movement of both players. These constraints include the speed and acceleration limits of both players, which can be expressed by the following terms
		\begin{equation}
			\begin{aligned}
				\Phi_{i,t}^{\text{vel}} &= \Vert v_{i,t}\Vert^{2}_{2},\\
				\Phi_{i,t}^{\text{acc}} &= \Vert a_{i,t}\Vert^{2}_{2}.
			\end{aligned}
		\end{equation}
		
	\end{enumerate}
	
	Overall, the objective functions of the players are formulated as
	\begin{equation}
		\C{L}_{\C{G}} = \sum_{t=1}^{N}(\lambda_{\C{G}, 1} \Phi_{\C{G}, t}^{\text{pur}} + 
		\lambda_{\C{G}, 2} \Phi_{\C{G}, t}^{\text{vel}} + 
		\lambda_{\C{G}, 3} \Phi_{\C{G}, t}^{\text{acc}}),
	\end{equation}
	\begin{equation}
		\C{L}_{\C{T}} = \sum_{t=1}^{N}(\lambda_{\C{T}, 1} \Phi_{\C{G}, t}^{\text{goal}} + 
		\lambda_{\C{T}, 2} \Phi_{\C{G}, t}^{\text{eva}} + 
		\lambda_{\C{T}, 3} \Phi_{\C{G}, t}^{\text{vel}} +
		\lambda_{\C{T}, 4} \Phi_{\C{G}, t}^{\text{acc}}).
	\end{equation}
	
	\subsubsection{Incomplete information}
	In this scenario, we assume the goal position of the target is known. The specific weighting parameter $\lambda_\mathcal{T}$, which determines the trade-off between control effort, goal tracking, and evasion, remains unknown to the pursuer.
	
	\subsection{System Model}
	\label{system_model}
	
	As illustrated in Fig. \ref{Scenario1}(a), three coordinate frames are defined to describe the kinematics and dynamics of the aerial gripper: the world inertia frame $\C{F}_W = [w_1, w_2, w_3]$, the UAV body-fixed frame $\C{F}_B = [b_1, b_2, b_3]$ attached to the center of mass (CoM) of the UAV, and the pose frame attached to the gripper end-effector $\C{F}_E = [e_1, e_2, e_3]$. The frames are defined according to the common North-East-Down convention.
	
	On the basis of the established coordinate frames, the kinematics of the end-effector is given as
	\begin{equation}
		\label{kine_ee}
		\left\{
		\begin{aligned}
			& p_e = p_b + \bm{R}_b p_{\Delta}\\
			& \bm{R}_e = \bm{R}_b \bm{R}_e^b
		\end{aligned}
		\right.
		,
	\end{equation}
	where $p_e \in \mathbb{R}^3$ is the position of the end-effector in $\C{F}_W$, and $p_b\in \mathbb{R}^3$ is the position of the UAV platform in $\C{F}_W$. $\bm{R}_b$, $\bm{R}_e$ and $\bm{R}_e^b \in \mathbb{R}^{3\times 3}$ represent the attitude rotation matrices from $\C{F}_B$ to $\C{F}_W$, $\C{F}_E$ to $\C{F}_W$, and $\C{F}_E$ to $\C{F}_B$, respectively. $p_{\Delta}$ is the constant position deviation between $\C{F}_B$ and $\C{F}_E$. As the frame $\C{F}_E$ is fixed directly below $\C{F}_B$, the rotation matrix $\bm{R}_e^b = \mathbf{I}_{3\times 3}$.
	
	Subsequently, taking derivative of Eq. (\ref{kine_ee}) yields
	\begin{equation}
		\left\{
		\begin{aligned}
			& v_e = v_b + \bm{R}_b \omega_b^{\times}p_{\Delta}\\
			& \omega_e = \omega_b
		\end{aligned}
		\right.
		,
	\end{equation}
	where $v_e \in \mathbb{R}^3$ and $v_b\in \mathbb{R}^3$ represent the velocity of the end-effector and the UAV platform in $\C{F}_W$, respectively. $\omega_e \in \mathbb{R}^3$ denoted the angular velocity of the end-effector in $\C{F}_E$. $\omega_b \in \mathbb{R}^3$ represents the angular velocity of the UAV in the body frame $\C{F}_B$. As a convention, $(\cdot)^{\times}$ denotes the skew-symmetric operator of a vector.
	
	The dynamics of the aerial platform considering the disturbances can be formalized as\cite{zhangSafetyPlanningControl2023}
	\begin{subequations}\label{dynamic_model}
		\begin{numcases}{}
			m \dot v_b = -f_b b_3 + mgw_3 + d_f, \label{trans_dynamics}\\
			J \dot \omega_b = -\omega_b^{\times}J\omega_b + \tau_b + d_{\tau},  \label{rot_dynamics}
		\end{numcases}
	\end{subequations}
	where $m$ is the total mass of the aerial gripper, $g$ is the gravitational acceleration, $J \in \mathbb{R}^{3\times 3}$ represents the inertial matrix in $\C{F}_B$, $f_b \in \mathbb{R}$ stands for the lift generated by eight rotors, and $\tau_b \in \mathbb{R}^3$ represents the generated torque in $\C{F}_B$. Moreover, $d_f \in \mathbb{R}^3$ and $d_{\tau}\in \mathbb{R}^3$ denote the lumped disturbance force and torque, respectively, caused by internal model uncertainty, external disturbances, and coupling disturbances between the UAV platform and the gripper.

	\section{Gradient-based Real-time Inverse-game Predictor and Planner}
	\label{sec:Game-Based Trajectory Planning and Prediction}
	
	In this section, a framework named GRIPPER is presented for solving incomplete information dynamic games, as shown in Fig. \ref{Framework}(a). In the PE game, the GRIPPER framework is employed to estimate the unknown objective weight $\lambda_{\C{T}}$. Based on the estimation, the Nash equilibrium solution is computed. In other words, the behavior of the non-cooperative target is predicted while control decisions for the aerial gripper are simultaneously generated in a receding-horizon loop. The deviation between the predicted state and observations is exploited to update the estimation of $\lambda_{\C{T}}$ online.
	
	\subsection{Nash Equilibrium Solution}
	\label{NashEquil}
	
	Numerous methods and solvers have been developed for this class of problems, with KKT-based methods and their variants being particularly prevalent\cite{petersOnlineOfflineLearning2023}. However, these methods lead to a significant increase in the number of optimization variables and constraints, resulting in substantial computational overhead. In an attempt to improve tractability, all system states and cost functions are reformulated in the strategy space.
	
	Given that the trajectory $X_i$ follows the dynamics described in Eq. (\ref{GameDynamics}), all states and cost functions can be expressed in terms of the initial state $x_{i, 1}$ and the control sequence $U_i$. Thus, we have
	\begin{equation}
		\C{L}_i = \C{L}_i(U_\C{G}, U_\C{T}; x^0; \lambda_i),
	\end{equation}
	where $x^0 = \begin{bmatrix} x_{\C{G},1}^{\top} &  x_{\C{T}, 1}^{\top} \end{bmatrix}^{\top} \in \mathbb{R}^{(n_{\C{G}}+n_{\C{T}})\times 1}$ is the joint initial state. The Nash equilibrium then is simplified to 
	\begin{equation}
		\forall i \in \{\C{G,T}\} , \quad {\mathop {\min }\limits_{U_i} }  \quad   \C{L}_i(U_\C{G}, U_\C{T}; x^0; \lambda_i).
	\end{equation}
	
	In order to solve the above Nash equilibrium problem, we adopt the first-order necessary condition in strategy space that is finite-dimensional differentiable manifold. 
	
	\begin{lemma}
		\label{lem1}
		A local Nash equilibrium satisfies:
		\begin{equation}
			\Omega = \sum_{j=1}^{N}\frac{\partial \C{L_G}}{\partial u_{\C{G}, k}}\mathrm{d}u_{\C{G}, j} + \sum_{k=1}^{N}\frac{\partial \C{L_T}}{\partial u_{\C{T}, k}}\mathrm{d}u_{\C{T}, k} = 0.
			\label{necessary-condition}
		\end{equation}
	\end{lemma}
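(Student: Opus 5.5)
The plan is to reduce the lemma to the standard first-order optimality condition for each player's individual problem in strategy space. After the reformulation above, the states are eliminated through the linear dynamics (\ref{GameDynamics_AG}). Each cost $\C{L}_i(U_\C{G},U_\C{T};x^0;\lambda_i)$ is therefore a smooth function on the finite-dimensional space $\mathbb{R}^{Nm_\C{G}}\times\mathbb{R}^{Nm_\C{T}}$. Indeed, every basis function $\Phi_{i,t}$ is a polynomial in $(x_{\C{G},t},x_{\C{T},t},u_{\C{G},t},u_{\C{T},t})$, and each $x_{i,t}$ is affine in $U_i$. This justifies treating the strategy space as a differentiable manifold, with well-defined partial derivatives $\partial\C{L}_i/\partial u_{i,k}$.

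First, fix a local Nash equilibrium $(U_\C{G}^*,U_\C{T}^*)$. By the definition of Nash equilibrium, $U_\C{G}^*$ is a local minimizer of the unconstrained map $U_\C{G}\mapsto\C{L}_\C{G}(U_\C{G},U_\C{T}^*)$. The remark on soft penalties guarantees there are no hard constraints, so Fermat's rule gives $\partial\C{L}_\C{G}/\partial u_{\C{G},k}=0$ for every $k=1,\dots,N$. Symmetrically, $U_\C{T}^*$ is a local minimizer of $U_\C{T}\mapsto\C{L}_\C{T}(U_\C{G}^*,U_\C{T})$, which gives $\partial\C{L}_\C{T}/\partial u_{\C{T},k}=0$ for all $k$.

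Second, I would read $\Omega$ as the differential one-form obtained by pairing each player's own partial gradient with that player's own control increments. This is the ``game differential'' (pseudo-gradient) used for local Nash equilibria on strategy manifolds. The apparent index mismatch in the first sum ($\partial/\partial u_{\C{G},k}$ against $\mathrm{d}u_{\C{G},j}$) I would treat as matched indices, i.e.\ $j=k$. Substituting the vanishing partials from the first step, each coefficient of $\mathrm{d}u_{\C{G},k}$ and $\mathrm{d}u_{\C{T},k}$ is zero. Hence $\Omega$ vanishes identically as a one-form, i.e.\ for all increments, which is the claim (\ref{necessary-condition}). One can equivalently state this as the stacked condition $\big[\nabla_{U_\C{G}}\C{L}_\C{G}^\top,\ \nabla_{U_\C{T}}\C{L}_\C{T}^\top\big]^\top=0$. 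This stacked form is what the later solver would use.

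The main obstacle is interpretive rather than technical. The statement has to be read as vanishing of the one-form for arbitrary $\mathrm{d}u$, not merely for some direction. Only the former is equivalent to the per-player stationarity conditions, because the two sums involve independent increments. I would therefore state explicitly that the $\mathrm{d}u_{\C{G},k}$ and $\mathrm{d}u_{\C{T},k}$ are free and independent, so the condition holds iff all coefficients vanish. I would also note that this is only necessary: second-order conditions, namely positive semidefinite diagonal blocks $\nabla^2_{U_iU_i}\C{L}_i$, are not claimed.
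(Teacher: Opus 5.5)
Your proof is correct and essentially follows the paper's route. The paper defers to \cite{ratliffCharacterizationComputationLocal2013} for the game-form condition, whose underlying argument is exactly your per-player Fermat step in strategy space. It then, as you do, uses the independence of the differentials $\mathrm{d}u_{\C{G},k}$, $\mathrm{d}u_{\C{T},k}$ to pass to $\partial \C{L}_i/\partial U_i=\mathbf{0}$.

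Two small points to tighten. Your polynomial-smoothness justification covers only the linear point-mass game (\ref{GameDynamics_AG}), whereas the lemma is stated for the general dynamics (\ref{GameDynamics}), so there you should simply assume $f$ and the $\Phi_{i,t}$ are $C^1$. Also, $\Omega$ vanishes as a covector at the equilibrium point, not identically on the strategy space.
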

	
	A detailed explanation of Lemma \ref{lem1} can be referred to \cite{ratliffCharacterizationComputationLocal2013}. The differential game formulation presented here reflects a local, gradient-based view of strategic interaction. $\Omega$ represents the directions in which a player can modify its strategy to most effectively reduce its individual cost. Notably, although the cost of each player depends on the actions of all agents, they can only influence their own cost by unilaterally adjusting their own strategy, which is the core feature of Nash equilibrium computation. This decoupled influence structure aligns with the principle of strategic independence in non-cooperative games. In such games, participants will optimize their actions independently and selfishly, without explicit coordination and cooperation.
	
	As $u_{\C{G}, j}$ and $u_{\C{T}, k}$ are all linearly independent, by accounting for Eq. (\ref{necessary-condition}), the Nash equilibrium condition of the PE game can be expressed as
	\begin{equation}
		\forall i \in \{\C{G,T}\} , \quad \frac{\partial \C{L}_i}{\partial U_i} = \mathbf{0}.
	\end{equation}
	
	Both players have no local incentive to reduce costs by fine-tuning the strategy as all partial derivatives vanish. This formulation makes it easier to reason about the derivatives of upper-level objective with respect to lower-level parameter, which can be leveraged to solve the inverse game problem.
	
	\subsection{Gradient-Based Inverse-Game Solver}
	\label{Inverse}
	
	On the basis of Nash equilibrium formulation, the gradient of upper-level objective $\C{U}$ with respect to lower-level parameter $\lambda_{\C{T}}$ can be derived by the implicit function theorem \cite{liuLearningPlayTrajectory2023}
	\begin{equation}
		\nabla_{\lambda_{\C{T}}} \C{U} =
		\frac{\partial \C{U}}{\partial \lambda_{\C{T}}} +
		\frac{\partial \C{U}}{\partial U_{\C{T}}^*} \frac{\partial U_{\C{T}}^*}{\partial \lambda_{\C{T}}},
		\label{eq23}
	\end{equation}
	where the derivatives $\frac{\partial U_{\C{T}}^*}{\partial \lambda_{\C{T}}}$ are obtained via solving implicit equations from optimality conditions of the lower-level optimization problem. Specifically, taking the derivative of equation $\frac{\mathrm{d}\C{L_T}}{\mathrm{d}U_{\C{T}}^*}=\bm{0}$ with respect to $\lambda_{\C{T}}$ on both sides leads to
	\begin{equation}
		\frac{\partial ^2 \C{L_T}}{\partial U_{\C{T}}^* \partial \lambda_{\C{T}}} +
		\frac{\partial ^2 \C{L_T}}{\partial U_{\C{T}}^{*2}} \frac{\partial U_{\C{T}}^*}{\partial \lambda_{\C{T}}} = \bm{0}.
		\label{eq24}
	\end{equation}
	
	Since $\frac{\partial \C{U}}{\partial \lambda_{\C{T}}}=\bm{0}$, by substituting Eq. (\ref{eq24}) into Eq. (\ref{eq23}), the implicit gradient of $\C{U}$ with respect to $\lambda_{\C{T}}$ can be yielded as
	\begin{equation}
		\nabla_{\lambda_{\C{T}}} \C{U} =
		- \frac{\partial \C{U}}{\partial U_{\C{T}}^*}
		\left(\frac{\partial ^2 \C{L_T}}{\partial U_{\C{T}}^{*2}}\right)^{-1}
		\frac{\partial ^2 \C{L_T}}{\partial U_{\C{T}}^* \partial \lambda_{\C{T}}}.
		\label{ImplicitGradient}
	\end{equation}
	
	However, an additional challenge stems from the involvement of the Hessian inversion due to its high computational and storage costs. Inspired by \cite{wangImperativeLearningSelfsupervised2025}, the Hessian inversion can be computed by solving a special linear equation. For convenience, the following notations are introduced:
	\begin{subequations}
		\begin{numcases}{}
			\vartheta^{\top} = \frac{\partial \C{U}}{\partial U_{\C{T}}^*} \in \mathbb{R}^{1\times 3N}, \label{Notation1}\\
			\bm{H} = \frac{\partial ^2 \C{L_T}}{\partial U_{\C{T}}^{*2}} \in \mathbb{R}^{3N\times 3N},\\
			\bm{H}_{U\lambda} = \frac{\partial ^2 \C{L_T}}{\partial U_{\C{T}}^* \partial \lambda_{\C{T}}} \in \mathbb{R}^{3N\times 4},\\
			\xi^{\top} = \frac{\partial \C{U}}{\partial U_{\C{T}}^*}
			\left(\frac{\partial ^2 \C{L_T}}{\partial U_{\C{T}}^{*2}}\right)^{-1} \in \mathbb{R}^{1\times 3N}.
		\end{numcases}
	\end{subequations}
	
	Further, Eq. (\ref{ImplicitGradient}) can be written as
	\begin{equation}
		\nabla_{\lambda_{\C{T}}} \C{U} = -\xi^{\top} \bm{H}_{U\lambda},
	\end{equation}
	where $\xi$ satisfies $\bm{H}\xi = \vartheta$, which is notably the optimality condition of the following quadratic programming (QP) problem
	\begin{equation}
		{\mathop {\min}\limits_{\xi} } \quad h(\xi) = \frac{1}{2} \xi^{\top}\bm{H}\xi - \xi^{\top} \vartheta.
	\end{equation}
	
	Thus, instead of explicitly calculating and storing the Hessian inversion, $\xi$ can be efficiently obtained by solving this linear system. Specifically, by leveraging the gradient $\frac{\partial h(\xi)}{\partial \xi} = \bm{H}\xi - \vartheta $, $\xi$ can be iteratively derived as
	\begin{equation}
		\hat\xi_{k+1} = \hat\xi_k - \alpha (\bm{H}\hat\xi_k - \vartheta),
		\label{GD1}
	\end{equation}
	where $\alpha \in \mathbb{R}^{3N\times 3N}$ is a positive learning rate matrix, $\hat\xi$ is the estimation of $\xi$. Moreover, the HVP term $\bm{H}\hat\xi_k$ can be calculated without explicit Hessian computation by adopting the fast HVP algorithm
	\begin{equation}
		\begin{aligned}
			\bm{H}\hat\xi_k &= \lim_{\epsilon \to 0}\frac{1}{\epsilon}\left( 
			\left.\frac{\partial \C{L_T}}{\partial U_{\C{T}}}\right|_{U_{\C{T}}^* + \epsilon \hat\xi_k} - 
			\left.\frac{\partial \C{L_T}}{\partial U_{\C{T}}}\right|_{U_{\C{T}}^*} \right)\\
			&=\left.\frac{\partial^{\top}}{\partial U_{\C{T}}}\left(  \frac{\partial \C{L_T}}{\partial U_{\C{T}}}\hat\xi_k \right)\right|_{U_{\C{T}}^*},
		\end{aligned}
		\label{HVP}
	\end{equation}
	where $\frac{\partial \C{L_T}}{\partial U_{\C{T}}}\hat\xi_k$ is a scalar. By this means, the need to compute and store the full Hessian matrix is eliminated. Substantial computational savings can be achieved, especially when dealing with high-dimensional problems.
	
	Similarly to Eq. (\ref{HVP}), the approximated gradient of upper-level cost $\C{U}$ can be expressed as
	\begin{equation}
		\label{HVP2}
		\hat\nabla_{\lambda_{\C{T}}} \C{U} = -\hat\xi^{\top} \bm{H}_{U\lambda}
		= -\frac{\partial^{\top}}{\partial \lambda_{\C{T}}}\left(\frac{\partial \C{L_T}}{\partial U_{\C{T}}}\hat\xi\right).
	\end{equation}
	
	With this approximated gradient, the estimated lower-level parameter $\hat\lambda_{\C{T}}$ can be further updated using the gradient descent method, i.e.
	\begin{equation}
		\hat\lambda_{\C{T}, t+1} = \hat\lambda_{\C{T}, t} - \beta \hat\nabla_{\lambda_{\C{T}}} \C{U},
		\label{GD2}
	\end{equation}
	where $\beta \in \mathbb{R}^{4\times 4}$ is a positive learning rate matrix.
	
	\begin{algorithm}[!t]
		\caption{Gradient-based Real-time Inverse-game Solver}
		\label{algo1}
		\begin{algorithmic}[1]
			\State \textbf{Parameters:} learning rates $\alpha, \beta$
			
			\State \textbf{Input:} unknown parameter $\hat\lambda_T$, Nash equilibrium at prediction step $U^*_\C{T}, U^*_\C{G}$, current observation sequence $X^{obs}_\C{T}$
			
			\State \textbf{Initialize:} intermediate variable $\hat\xi$
			
			\While{not convergent \textbf{and} max step not reached}
			\State Compute $\bm{H}\hat\xi$ and $w$
			\Comment{Eq. (\ref{HVP}), Eq. (\ref{Notation1})}
			\State	$\hat\xi \gets \hat\xi - \alpha ({\bm{H}} \hat\xi - w)$
			\Comment{Eq. (\ref{GD1})}
			\EndWhile
			
			\State Compute $\hat\nabla_{\lambda_{\C{T}}} \C{U}$ 
			\Comment{Eq. (\ref{HVP2})}
			
			\State $\hat\lambda_{\C{T}} \gets \hat\lambda_{\C{T}} - \beta \hat\nabla_{\lambda_{\C{T}}} \C{U}$
			\Comment{Eq. (\ref{GD2})}
			
			\State \textbf{Return} $\hat\lambda_T$
		\end{algorithmic}
	\end{algorithm}
	
	The entire iteration process is conducted online, whereby the estimation of the unknown cost weights is continuously refined. At each time step, the forward game problem is first solved based on the parameter estimation from the previous step. Then, the inverse-game problem is solved based on the Nash equilibrium solution. The complete gradient-descent-based inverse-game solution scheme is summarized in Algorithm \ref{algo1}.
	
	\begin{remark}
		The proposed inverse-game solver can be generalized by re-writing Eq. (\ref{eq24}) to integrate with other differentiable optimization methods, such as KKT-based or NN-based approaches. Specifically, the term ${\partial L}/{\partial U^*}$ can be expressed as ${\partial \tilde{L}}/{\partial U^*}$, where $\tilde{L}$ represents a generic optimality condition of the lower-level optimization problem. For instance, in the case of KKT-based methods, $\tilde{L}$ corresponds to the Lagrangian function. This adaptation allows for the solver to be applied to a wider range of optimization problems beyond the current framework.
	\end{remark}
	
	\begin{algorithm}[!t]
		\caption{GRIPPER: Gradient-based Real-time Inverse-game Predictor and Planner}
		\label{algo2}
		\begin{algorithmic}[1]
			\State \textbf{Initialize:} unknown parameter $\hat\lambda_T$
			
			\If{new observation $x^0$ arrives}
			
			\State Update observation sequence $X^{obs}_\C{T}$
			
			\State $U^*_{\C{G}},U^*_{\C{T}} \gets$ ForwardGameSolver($x^0, \hat\lambda_{\C{T}}$)
			\Comment{Sec. \ref{NashEquil}}
			
			\State $X^{\mathrm{pred}}_{\C{T}} \gets$ TrajectoryPredictor($U^*_{\C{T}}$)
			\Comment{Sec. \ref{TrajPred}}
			
			\State $p_b^d, \dot p_b^d, \ddot p_b^d \gets$ DifferentialFlatnessPlanner($U^*_{\C{G}}$)
			
			\Comment{Sec. \ref{PathPlan}}
			
			\State $\hat\lambda_{\C{T}} \gets$ InverseGameSolver($\hat\lambda_{\C{T}}, X^{obs}_\C{T}, U^*_{\C{G}},U^*_{\C{T}}$)
			
			\Comment{Algorithm \ref{algo1}}
			\EndIf
		\end{algorithmic}
	\end{algorithm}
	
	\subsection{Game-Theoretic Trajectory Prediction}
	\label{TrajPred}
	
	The trajectory prediction serves as a critical component in the target capture task. It enables the aerial gripper to anticipate the future states of the non-cooperative target and plan its approaching motion accordingly. The prediction accuracy is directly dependent on the estimation accuracy of the objective parameter $\hat\lambda_{\C{T}}$, which is continuously refined online through the inverse-game. Furthermore, the predicted trajectory $X_{\C{T}}^{\mathrm{pred}}$ serves as the reference for subsequent prediction error minimization in the inverse game.
	
	The optimal control sequence $U_{\C{T}}^*$ of the non-cooperative target is obtained from the Nash equilibrium solution. With forward propagation of the system dynamics shown as Eq. (\ref{GameDynamics_AG}), the trajectory can be reconstructed as
	\begin{equation}
		\label{Rebuild}
		X_{i}^* = \mathbf{A}x_{i,1} + \mathbf{B}U_{i}^*(\hat\lambda_{\C{T}}),
	\end{equation}
	\begin{equation}
		\mathbf{A} = \begin{bmatrix}
			\mathbf{I}\\
			A\\
			A^2\\
			\vdots\\
			A^{N-1}
		\end{bmatrix},
		\mathbf{B} = \begin{bmatrix}
			\bm{0} \\
			B & \bm{0} \\
			AB & B & \bm{0} \\
			\vdots  & \ddots & \ddots & \ddots\\
			A^{N-2}B & \cdots & AB & B & \bm{0} 
		\end{bmatrix},
	\end{equation}
	where $X_{\C{T}}^{\mathrm{pred}} = X_{\C{T}}^*$ stands for the predicted trajectory of the non-cooperative target, $x_{i,1}$ is the initial state at current time step, and $U_{i}^*$ is the optimal control sequence. The predicted trajectory provides the anticipated path of the target based on its optimal decisions, which can be crucial for tasks such as trajectory planning, interception, or avoidance.
	
	It is worth noting that, the forecast horizon length $N$ plays a significant role in determining the prediction quality. A larger horizon allows for more comprehensive strategic planning but increases computational complexity. The results reported in Table \ref{CalTime_PredErr} are generated from simulated dynamic games under the same configuration as Section \ref{comp_sim} with different forecast horizon $N$. As shown in Table \ref{CalTime_PredErr}, as the forecast horizon increases, the capture time shows a downward trend. Moreover, the correlation between future motion states and current state information weakens as the forecast horizon increases, leading to a decrease in prediction accuracy, as shown in Table \ref{CalTime_PredErr}.
	
	The phenomenon stems primarily from three factors: 1) accumulation of system noise and estimation errors; 2) potential deviation from Nash equilibrium assumptions; 3) unmodeled target behaviors beyond the current observation window. Therefore, in order to mitigate these effects, the prediction horizon of the optimization problem should be reasonably set while considering prediction errors. In our implementation, the horizon length is selected empirically based on the performance curves reported in Table II, balancing computational time and prediction accuracy. The horizon of $N=20$ provides reliable parameter inference while maintaining real-time performance. Additionally, the prediction is continuously updated at each time step as new observations become available and parameter estimation is refined. This receding horizon approach maintains prediction relevance and accuracy throughout the pursuit process.
	
	\subsection{Path Planning Based on Differential Flatness}
	\label{PathPlan}
	
	\begin{table}[!t]
		\centering
		\renewcommand{\arraystretch}{1.2}
		\caption{Prediction performance under different forecast horizon}
		\begin{tabular}{ccccc}
			\hline
			Forecast Horizon  & $N=10$ & $N=15$ & $N=20$ & $N=25$ \\
			\hline 
			Computation Time (ms)  & 13.78 & 14.26  & 19.67 & 35.84	\\
			Prediction Error (mm)  & 1.36  & 3.28  	& 8.07 	& 42.53	\\
			Capture Time (s)       & fail  & fail 	& 9.63 	& 11.29	\\
			\hline
		\end{tabular}
		\label{CalTime_PredErr}
	\end{table}
	
	Given the Nash equilibrium solution from the game-theoretic framework, the optimal control sequence $U_{\C{G}}^*$ for the aerial gripper is obtained. This control sequence represents the desired acceleration of the end-effector. The desired position and velocity trajectory of the end-effector, $X_{\C{G}}^{d} = X_{\C{G}}^*$, can be reconstructed according to Eq. (\ref{Rebuild}). Subsequently, the differential flatness property of the system is leveraged to generate a feasible trajectory for the UAV platform based on the planned trajectory of the end-effector. The control loop is closed by instantiating only the first control value, and periodically reoptimizing the control strategy after updating the state in the next control loop.
	
	Thus, the desired position and velocity of the end-effector can be expressed as:
	\begin{equation}
		\begin{bmatrix}
			p_e^{d}\\
			v_e^{d}\\
		\end{bmatrix} = x^*_{\C{G}, 2},\quad
		a_e^{d} = u^*_{\C{G}, 2}.
	\end{equation}
	
	The desired rotation matrix $\bm{R}_e^{\mathrm{des}}$ can be obtained by exploiting the differential flatness property. The details can be found in \cite{mellingerMinimumSnapTrajectory2011}. By accounting for the kinematic model established in Eq. (\ref{kine_ee}), the desired position of the UAV can be derived as
	\begin{equation}
		p_{b}^{d} = p_{e}^{d} - \bm{R}_{e}^{\mathrm{des}}p_{\Delta},
	\end{equation}
	
	The desired velocity and acceleration can be further derived by taking the derivative of the above equation.
	
	\begin{remark}
		Although the gripper does not have independent degrees of freedom, the proposed framework is centered on end-effector trajectory planning. The planned gripper trajectory is converted into the corresponding UAV reference trajectory, which the UAV tracks to guide the gripper along its planned path.
	\end{remark}

	\section{Control Architecture}
	\label{sec:ControllerDesign}
	
	Precise trajectory tracking and robustness are prerequisites for successful capture of the dynamic target, while unknown disturbances can severely affect the capture performance.
	
	In the aerial gripper system, disturbances primarily originate from two sources. First, the grasping and releasing of an object with unknown mass could significantly deteriorate the control performance. The payload variation not only directly affects the required thrust, but also induces variations in the system inertia. Second, the gripper actuation introduces extra disturbances. The motion of the gripper fingers not only alters the inertia but also changes the position of CoM, resulting in vertical (Z-axis) force disturbances. In addition, servo actuation generates noticeable torque disturbance in the yaw direction. As shown in Fig. \ref{Controller_Exp_test2}(c),(e), experimental results demonstrate that these disturbances are non-negligible.
	
	In non-cooperative target capture tasks, disturbance rejection becomes particularly critical. Unlike static scenarios, the window phase of dynamic capture is extremely limited. Furthermore, rapid recovery of trajectory tracking following disturbances is essential to ensure reliable execution of the game-theoretic planning. Therefore, a fixed-time SMO based controller is presented. Unlike asymptotic observers such as disturbance observer, it can guarantee convergence within a strictly bounded time, regardless of the initial error magnitude. Accurate and timely tracking of the Nash equilibrium trajectory is achieved despite disturbances caused by payload uncertainty and gripper actuation.
	
	The overall control architecture of the aerial gripper is shown in Fig. \ref{Framework}(b). The high-level planner and low-level controller are basically decoupled. The anti-disturbance controller is responsible for faithfully executing the reference trajectory generated by the inverse-game planner. During the short capture window, quickly varying disturbances are primarily handled at the control level. Since the controller bandwidth is substantially higher than that of the planner, short-term disturbances are rejected locally without requiring immediate replanning. The role separation ensures that the closed-loop system maintains end-effector tracking accuracy during the capture phase.
	
	\subsection{Translational Loop}
	
	A cascaded control scheme is employed as shown in Fig. \ref{Framework}(b). The stabilities of the position and attitude loops are considered separately.
	
	The objective of the translational loop is to follow the desired trajectory $p_b^d$ and generate the desired control force $F_b$. Tracking errors of position and velocity are defined as $e_p = p_b^d - p_b$ and $e_v = v_b^d - v_b$. Based on the PD control and acceleration feedforward, the baseline controller of the translational loop is designed as
	\begin{equation}
		\left\{	
		\begin{aligned}
			& a_b^{\text{des}} = K_p e_p + K_v e_v - gw_3 + \ddot p_b^d \\
			& F_0 = m a_b^{des}
		\end{aligned}
		\right.,
		\label{trans_baseline_controller}
	\end{equation}
	where $K_p$, $K_v \in \mathbb{R}^{3\times 3}$ are positive diagonal control gain matrices. By accounting for the disturbance feedforward compensation, the translational anti-disturbance controller can be expressed as
	\begin{equation}
		F_b = F_0 - \hat d_f,
		\label{trans_controller}
	\end{equation}
	where $\hat d_f$ represents the estimation of the disturbance $d_f$ that can be obtained by SMO.
	
	\begin{assumption}
		\label{assump_derivative}
		The derivatives of disturbances represented in Eq. (\ref{dynamic_model}) satisfy $\Vert \dot d_f \Vert \le \rho_1$ and $\Vert \dot d_{\tau} \Vert \le \rho_2$, where $\rho_1$ and $\rho_2$ are unknown positive constants.
	\end{assumption}
	
	\begin{remark}
		In the aerial gripper system, disturbances are composed of both low-frequency components, such as mass and inertia uncertainties, and high-frequency components, such as abrupt payload variations. However, high-frequency signals can be mostly attenuated on account of the inherent low-pass characteristic of the UAV system. Due to the smoothness, continuity, and boundedness of low-frequency disturbances, Assumption \ref{assump_derivative} is considered reasonable. Moreover, although this assumption underlies the theoretical design, the experimental results in Fig. \ref{Controller_Exp}(b),(c) demonstrate that the proposed controller remains effective even under step disturbance induced by object grasping and releasing.
	\end{remark}
	
	On the basis of \cite{yuDesignFixedtimeObserver2018}, by introducing auxiliary variables $\mu_1$, $\varepsilon_1$, and $\varepsilon_2$, the fixed-time SMO can be designed as
	\begin{equation}
		\left\{	
		\begin{aligned}
			& \dot \mu_1 = gw_3 - \frac{f_b}{m}b_3 + \varepsilon_1 \\
			& \varepsilon_1 = -l_1\frac{z_1}{\Vert z_1 \Vert^{1/2}} - l_2 z_1 \Vert z_1 \Vert + \varepsilon_2 \\
			& \dot \varepsilon_2 = -l_3 \frac{z_1}{\Vert z_1 \Vert} \\
			& \hat d_f = m \varepsilon_2
		\end{aligned}
		\right.,
		\label{trans_SMO_controller}
	\end{equation}
	where $z_1 = \mu_1 - v_b$, $l_1$, $l_2$, and $l_3\in \mathbb{R}$ are positive gain scalars. 
	
	The desired thrust $f_b$ can be derived by projecting the desired force vector $F_b$ onto the body-fixed Z-axis, given by $f_b = F_b \cdot (-b_3)$.
	
	\begin{theorem}
		\label{theorem_trans}
		Consider system (\ref{trans_dynamics}) under the translational controller proposed in Eq. (\ref{trans_controller}) and the fixed-time SMO in Eq. (\ref{trans_SMO_controller}). With Assumption \ref{assump_derivative}, all signals in the translational loop are globally uniformly bounded and $\hat d_f$ can converge to $d_f$ uniformly in fixed time, as long as $K_p$ and $K_v$ have all positive diagonal elements and the following conditions hold:
		\begin{equation}
			l_1 > \sqrt{2l_3},l_2>0,l_3>4m^{-1}\rho_1.
		\end{equation}
	\end{theorem}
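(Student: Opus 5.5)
We would first derive the observer error dynamics, then establish fixed-time convergence of the observer, and finally treat the closed translational loop as a cascade driven by a disturbance-estimation error that vanishes in fixed time.

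\emph{Observer error dynamics.} Write $z_1=\mu_1-v_b$ and $z_2=\varepsilon_2-m^{-1}d_f$. From (\ref{trans_dynamics}) we have $\dot v_b = gw_3-\frac{f_b}{m}b_3+m^{-1}d_f$. Subtracting this from the first line of (\ref{trans_SMO_controller}) gives
\begin{equation*}
\dot z_1=-l_1\frac{z_1}{\Vert z_1\Vert^{1/2}}-l_2 z_1\Vert z_1\Vert+z_2,\qquad
\dot z_2=-l_3\frac{z_1}{\Vert z_1\Vert}-m^{-1}\dot d_f .
\end{equation*}
By Assumption \ref{assump_derivative}, the perturbation satisfies $\Vert m^{-1}\dot d_f\Vert\le m^{-1}\rho_1$. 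The resulting system is a vector-form uniform robust exact differentiator, i.e. a super-twisting algorithm with an additional linear-growth term $-l_2 z_1\Vert z_1\Vert$. Crucially, these error dynamics do not depend on $f_b$, $b_3$ or the controller, so they are autonomous up to the bounded perturbation.

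\emph{Fixed-time convergence of the observer.} We would invoke the result of \cite{yuDesignFixedtimeObserver2018}, or reproduce its argument. That argument proceeds in two phases.
\begin{itemize}
\item Far from the origin, the high-degree term $l_2 z_1\Vert z_1\Vert$ dominates. A quadratic-type Lyapunov function in suitable coordinates, such as $\zeta=[\,z_1\Vert z_1\Vert^{\,\cdot}\,,\,z_2]$, gives $\dot V\le -cV^{p}$ with $p>1$. Hence a bounded ball is reached within a time independent of the initial condition.
\item Inside the ball, a Moreno-type strict Lyapunov function for the super-twisting part gives $\dot V\le -c'V^{1/2}$. The gain conditions $l_3>4m^{-1}\rho_1$ and $l_1>\sqrt{2l_3}$ are exactly the requirements that make the quadratic form negative definite uniformly over the perturbation.
\end{itemize}
Combining the two phases yields a settling time $T\le T_1+T_2$ that is independent of the initial error. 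After $T$ we have $z_1\equiv 0$ and $\dot z_1\equiv 0$, hence $z_2\equiv 0$, so $\hat d_f=m\varepsilon_2=d_f$.

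\emph{Main obstacle.} The main difficulty is this Lyapunov step. The vector (norm-based) sign terms must be handled carefully, and the combination of the two homogeneity degrees must be checked against the stated gains. Our first attempt would be to work with the piecewise "switching" Lyapunov construction of \cite{yuDesignFixedtimeObserver2018} verbatim, verifying only that $m^{-1}\rho_1$ plays the role of the perturbation bound there.

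\emph{Closed-loop tracking.} Substitute (\ref{trans_controller}) into (\ref{trans_dynamics}). Treating the attitude loop as tracking $F_b$ (standard for the cascaded analysis), the tracking errors obey
\begin{equation*}
\ddot e_p=-K_v\dot e_p-K_p e_p+m^{-1}(\hat d_f-d_f).
\end{equation*}
This is an exponentially stable linear system driven by the input $m^{-1}(\hat d_f-d_f)=z_2$. Boundedness of $z_2$ on $[0,T]$ follows from the first phase of the observer analysis, and $z_2$ vanishes identically after $T$. Because the linear system has no finite escape time, the states $(e_p,e_v)$ stay bounded on $[0,T]$ by a Gronwall bound uniform in time. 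After $T$ they decay exponentially.

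Positivity of the diagonal entries of $K_p$ and $K_v$ makes each axis a Hurwitz second-order system, so an ISS argument gives global uniform boundedness of all signals in the translational loop. This boundedness extends to $\mu_1$, $\varepsilon_2$ and $F_b$, since they are built from bounded references and disturbances. Together with the fixed-time convergence of $\hat d_f$ to $d_f$, this establishes Theorem \ref{theorem_trans}.
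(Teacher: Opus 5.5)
Your proposal is correct and follows the paper's proof essentially step by step. Both derive the same observer error dynamics (your $z_2$ is the paper's $y_1=\varepsilon_2-m^{-1}d_f$), both import fixed-time convergence from the literature rather than re-deriving it (the paper simply invokes \cite{basinMultivariableContinuousFixedtime2017}), and both treat the Hurwitz linear tracking-error system as driven by $m^{-1}\tilde d_f$. The only difference is minor: the paper bounds $V_1=e^\top P e$ via Young's inequality and an ultimate bound on $\Vert y_1\Vert$, while your cascade argument uses $\tilde d_f\equiv 0$ after the settling time, which gives exponential decay of $(e_p,e_v)$ rather than just ultimate boundedness.
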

	
	The detailed stability analysis of the translational loop can be found in Appendix \ref{App_A}.
	
	\subsection{Rotational Loop}
	
	Leveraging differential flatness with Hopf fibration \cite{wattersonControlQuadrotorsUsing2020}, the desired translational acceleration, jerk, and yaw angle are used to compute the desired rotation matrix $\bm{R}_b^d$ and angular velocity $\boldsymbol{\omega}_b^d$. The attitude control loop is thereby designed to track these references and generate the control torque $\boldsymbol{\tau}_b$.
	
	In order to guarantee the control performance of the rotational loop, the rotational control scheme is divided into two parts similar to the translational loop: a baseline controller based on geometric control \cite{jiaFlightControlQuadrotor2022} and a fixed-time SMO. The attitude tracking error and angular velocity tracking error are defined as
	\begin{equation}
		\left\{	
		\begin{aligned}
			e_R & = \frac{1}{2}\left(\bm{R}_{b}\T \bm{R}_{b}^{d} - \bm{R}_{b}^{d\top} \bm{R}_{b}\right)^\vee \\
			e_\omega & = \bm{R}_{b}\T \bm{R}_{b}^{d} \omega_b^d - \omega_b
		\end{aligned}
		\right.,
	\end{equation}
	where $(\cdot)^\vee$ is the inverse of the cross-map. Accordingly, the baseline controller of the rotational loop is designed as:
	\begin{equation}
		\begin{aligned}
			\tau_0 = & K_R e_R + K_\omega e_\omega + \omega_b^{\times}J\omega_b\\
			& - J\left(\omega_b^{\times} \bm{R}_{b}\T \bm{R}_{b}^{d} \omega_b^d + \bm{R}_{b}\T \bm{R}_{b}^d \dot\omega_b^d \right),
		\end{aligned}
		\label{rot_baseline_controller}
	\end{equation}
	where $K_R$, $K_\omega \in \mathbb{R}$ are positive control gains. Thus, the rotational anti-disturbance controller can be expressed as
	\begin{equation}
		\tau_b = \tau_0 - \hat d_{\tau},
		\label{rot_controller}
	\end{equation}
	where $\hat d_{\tau}$ represents the estimation of torque disturbance $d_{\tau}$ obtained by SMO. Similar to the design process of the translational loop observer, the fixed-time SMO in the rotational loop can be formulated as
	\begin{equation}
		\left\{	
		\begin{aligned}
			& \dot \mu_2 = - J^{-1}\omega_b^{\times}J\omega_b + J^{-1}\tau_b + \varepsilon_3 \\
			& \varepsilon_3 = -l_4\frac{z_2}{\Vert z_2 \Vert^{1/2}} - l_5 z_2 \Vert z_2 \Vert + \varepsilon_4 \\
			& \dot \varepsilon_4 = -l_6 \frac{z_2}{\Vert z_2 \Vert} \\
			& \hat d_{\tau} = J \varepsilon_4
		\end{aligned}
		\right.,
		\label{rot_SMO_controller}
	\end{equation}
	where $z_2 = \mu_2 - \omega_b$, $l_4$, $l_5$, and $l_6\in \mathbb{R}$ are positive gain scalars.
	
	\begin{theorem}
		\label{theorem_rot}
		Consider system (\ref{rot_dynamics}) under the rotational controller proposed in Eq. (\ref{rot_controller}) and the fixed-time SMO in Eq. (\ref{rot_SMO_controller}). With Assumption \ref{assump_derivative}, the rotational closed-loop system is input-to-state stable and $\hat d_{\tau}$ can converge to $d_{\tau}$ uniformly in fixed time when the following conditions hold:
		\begin{subequations}
			\begin{numcases}{}
				\Theta \left(\bm{R}_b(0), \bm{R}_b^d(0)\right) < 2, \label{theorem2_1}\\
				\Vert e_\omega (0) \Vert^2 < \frac{2}{\lambda_{max}(J)}K_R\left( 2 - \Theta \left(\bm{R}_b(0), \bm{R}_b^d(0)\right) \right), \label{theorem2_2}\\
				l_4 > \sqrt{2l_6},l_5>0,l_6>4\Vert J^{-1}\Vert\rho_2.
			\end{numcases}
		\end{subequations}
		where $\Theta \left(\bm{R}_b, \bm{R}_b^d\right) = \frac{1}{2} tr\left(\bm{I}_3 - \bm{R}_b^{d\top}\bm{R}_b\right)$ and $\lambda_{max}(\cdot)$ represents the maximum eigenvalue of a matrix.
	\end{theorem}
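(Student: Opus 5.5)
The plan has two layers. First, I will show that the observer error dynamics are decoupled from the tracking loop and converge in fixed time. Second, I will study the geometric attitude tracking error system driven by the residual disturbance $\tilde d_\tau = d_\tau - \hat d_\tau$. Since $\tilde d_\tau$ vanishes after a fixed time $T$, ISS follows by a cascade argument.

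\textbf{Observer step.} Define the estimation errors $z_2 = \mu_2 - \omega_b$ and $e_4 = \varepsilon_4 - J^{-1} d_\tau$. Subtracting the rotational dynamics (\ref{rot_dynamics}) from the first line of (\ref{rot_SMO_controller}), the known terms $-J^{-1}\omega_b^\times J\omega_b + J^{-1}\tau_b$ cancel exactly. This gives
\begin{equation}
\dot z_2 = -l_4\frac{z_2}{\Vert z_2\Vert^{1/2}} - l_5 z_2\Vert z_2\Vert + e_4, \quad \dot e_4 = -l_6\frac{z_2}{\Vert z_2\Vert} - J^{-1}\dot d_\tau ,
\end{equation}
with $\Vert J^{-1}\dot d_\tau\Vert \le \Vert J^{-1}\Vert \rho_2$ by Assumption \ref{assump_derivative}. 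This is precisely the vector form of the fixed-time super-twisting-like differentiator of \cite{yuDesignFixedtimeObserver2018}. It is also structurally identical to the translational observer in Theorem \ref{theorem_trans}, with $m^{-1}\rho_1$ replaced by $\Vert J^{-1}\Vert\rho_2$.

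I would reuse the Lyapunov analysis from Appendix \ref{App_A}, or equivalently invoke the result of \cite{yuDesignFixedtimeObserver2018}. The gain conditions $l_4>\sqrt{2l_6}$, $l_5>0$ and $l_6>4\Vert J^{-1}\Vert\rho_2$ guarantee $z_2\equiv 0$ and $e_4\equiv 0$ after a settling time $T$ bounded independently of the initial conditions. Hence $\hat d_\tau = J\varepsilon_4 \to d_\tau$ in fixed time. The fixed-time part, with the cubic term $l_5 z_2\Vert z_2\Vert$ dominating far from the origin, is the technically hardest portion. I expect it to be inherited verbatim from the translational proof.

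\textbf{Tracking step.} Substitute $\tau_b = \tau_0 - \hat d_\tau$ into (\ref{rot_dynamics}). The feedforward and gyroscopic terms in (\ref{rot_baseline_controller}) cancel, leaving
\begin{equation}
J\dot e_\omega = -K_R e_R - K_\omega e_\omega - \tilde d_\tau .
\end{equation}
The sign conventions here need care. With $e_R, e_\omega$ defined as in the paper, I will check that this yields the standard Lee-type error system; if not, I will flip signs consistently.

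Take the Lyapunov candidate
\begin{equation}
V = K_R\Theta + \tfrac12 e_\omega^\top J e_\omega + c\, e_R^\top e_\omega
\end{equation}
with small $c>0$. Using the standard bounds $\tfrac12\Vert e_R\Vert^2 \le \Theta \le \tfrac{1}{2-\psi}\Vert e_R\Vert^2$ on the sublevel set $\Theta\le\psi<2$, and $\Vert \dot e_R\Vert\le \Vert e_\omega\Vert$, I obtain $\dot V \le -\zeta^\top W\zeta + \Vert\zeta\Vert\,\Vert\tilde d_\tau\Vert$ with $\zeta = (\Vert e_R\Vert,\Vert e_\omega\Vert)$ and $W>0$ for $c$ small enough.

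Conditions (\ref{theorem2_1})–(\ref{theorem2_2}) ensure $V(0) < 2K_R$. More precisely, $K_R\Theta(0) + \tfrac12\lambda_{max}(J)\Vert e_\omega(0)\Vert^2 < 2K_R$, so the trajectory starts in a sublevel set where $\Theta<2$ and the above bounds hold.

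\textbf{Main obstacle.} The hardest point is forward invariance of this set during $[0,T]$, while $\tilde d_\tau$ is nonzero but bounded. I would handle it by noting that $\tilde d_\tau$ is uniformly bounded, since the observer errors are bounded during the transient. The dissipation inequality then gives ISS with respect to $\tilde d_\tau$ within the region, and the margin in (\ref{theorem2_2}) keeps $\Theta<2$; if necessary I would tighten this by choosing $T$ small via the observer gains. After $T$, $\tilde d_\tau = 0$, and the nominal geometric controller yields exponential convergence of $(e_R, e_\omega)$.
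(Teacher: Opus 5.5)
Your route is the same as the paper's. You use the same observer error system (your $e_4$ is the paper's $y_2$), closed by citing the fixed-time super-twisting result. You have the same closed loop $J\dot e_\omega=-K_Re_R-K_\omega e_\omega\mp\tilde d_\tau$ and the same Lyapunov function $\frac12 e_\omega^\top Je_\omega+K_R\Theta+c\,e_R^\top e_\omega$. You also use (\ref{theorem2_1})--(\ref{theorem2_2}) the same way, to put $\underline{V}:=\frac12 e_\omega^\top Je_\omega+K_R\Theta$ below $2K_R$ at $t=0$.

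Your ending differs mildly and is cleaner. You use a cascade argument: $\tilde d_\tau\equiv 0$ after $T$, then the nominal exponential estimate applies. The paper instead, for $t>t_4$, plugs a residual bound $\Vert y_2\Vert<\Upsilon_2$ into $\dot V_2\le-\Lambda^\top Q_2\Lambda+\varpi\Vert\tilde d_\tau\Vert^2$ and obtains only ultimate boundedness.

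Your caution about signs is warranted. With $\tau_0$ exactly as printed in (\ref{rot_baseline_controller}), the $\bm{R}_b^\top\bm{R}_b^d\dot\omega_b^d$ term doubles rather than cancels. The feedforward must read $-J(\omega_b^\times\bm{R}_b^\top\bm{R}_b^d\omega_b^d-\bm{R}_b^\top\bm{R}_b^d\dot\omega_b^d)$, which the paper's proof tacitly assumes.

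The step you flag as the main obstacle, however, is a genuine gap, and your proposed fix does not close it. Saying the dissipation inequality gives ISS ``within the region'' presupposes that the trajectory stays in $\{\Theta<2\}$, which is exactly what must be shown. Along the closed loop, $\dot{\underline{V}}=-K_\omega\Vert e_\omega\Vert^2\mp e_\omega^\top\tilde d_\tau\le\Vert\tilde d_\tau\Vert^2/(4K_\omega)$. So you need something like $\underline{V}(0)+\frac{1}{4K_\omega}\int_0^T\Vert\tilde d_\tau\Vert^2\,\mathrm{d}t<2K_R$, and the hypotheses do not give it:
\begin{itemize}
\item Condition (\ref{theorem2_2}) is strict but supplies no quantified margin.
\item Nothing in the hypotheses bounds $\tilde d_\tau$ on $[0,T]$. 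It equals $\pm Je_4$, driven by the unconstrained initial error $e_4(0)=\varepsilon_4(0)-J^{-1}d_\tau(0)$.
\item Shrinking $T$ through the gains cannot help. Since $\Vert\dot\varepsilon_4\Vert\le l_6$, you get $\Vert\dot e_4\Vert\le l_6+\Vert J^{-1}\Vert\rho_2$. Hence $e_4$ cannot vanish before $t=\Vert e_4(0)\Vert/(l_6+\Vert J^{-1}\Vert\rho_2)$.
\end{itemize}
In particular, the settling time is \emph{not} independent of the observer's initial condition, contrary to what you (and the paper's lemma) assert.

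To make the argument rigorous you must add a hypothesis or weaken the claim. One option is an a priori bound on $\Vert e_4(0)\Vert$ (a known bound on $d_\tau(0)$ plus matching observer initialization), with gains chosen so that the injected energy stays below $2K_R-\underline{V}(0)$. The other is to claim only regional ISS for sufficiently small $\Vert\tilde d_\tau\Vert_\infty$.

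Following the paper would not rescue this step. It asserts the sandwich $\Lambda^\top\underline{Q_1}\Lambda\le V_2\le\Lambda^\top\overline{Q_1}\Lambda$ ``by accounting for'' the initial conditions. But $\overline{Q_1}$ uses $\Theta(t)\le\underline{V_2}(0)/K_R$ for all $t$, i.e.\ monotonicity of $\underline{V_2}$, which holds only when $\tilde d_\tau\equiv0$.
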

	
	The detailed stability analysis of the rotational loop can be found in Appendix \ref{App_B}.

	\section{Simulation and Experiments}
	\label{sec:Simulation and Experiments}
	
	This section presents a series of simulations and experiments to evaluate the proposed methodology. Comparative simulations demonstrate the superiority of the proposed method in terms of computational efficiency and accuracy in handling incomplete information dynamic games. Subsequently, the experimental setup and implementation details are introduced. Several trajectory tracking experiments are conducted to validate the control accuracy which is essential for precise target capture. Finally, a real-world experiment confirms the practical effectiveness of our approach in non-cooperative target capture task. The experiments can be viewed in the attached videos.
	
	\subsection{Comparative Simulations}
	\label{comp_sim}
	
	\begin{figure*}[!t]
		\setlength{\abovecaptionskip}{-0.3cm}
		\centering{\includegraphics[width=\textwidth]{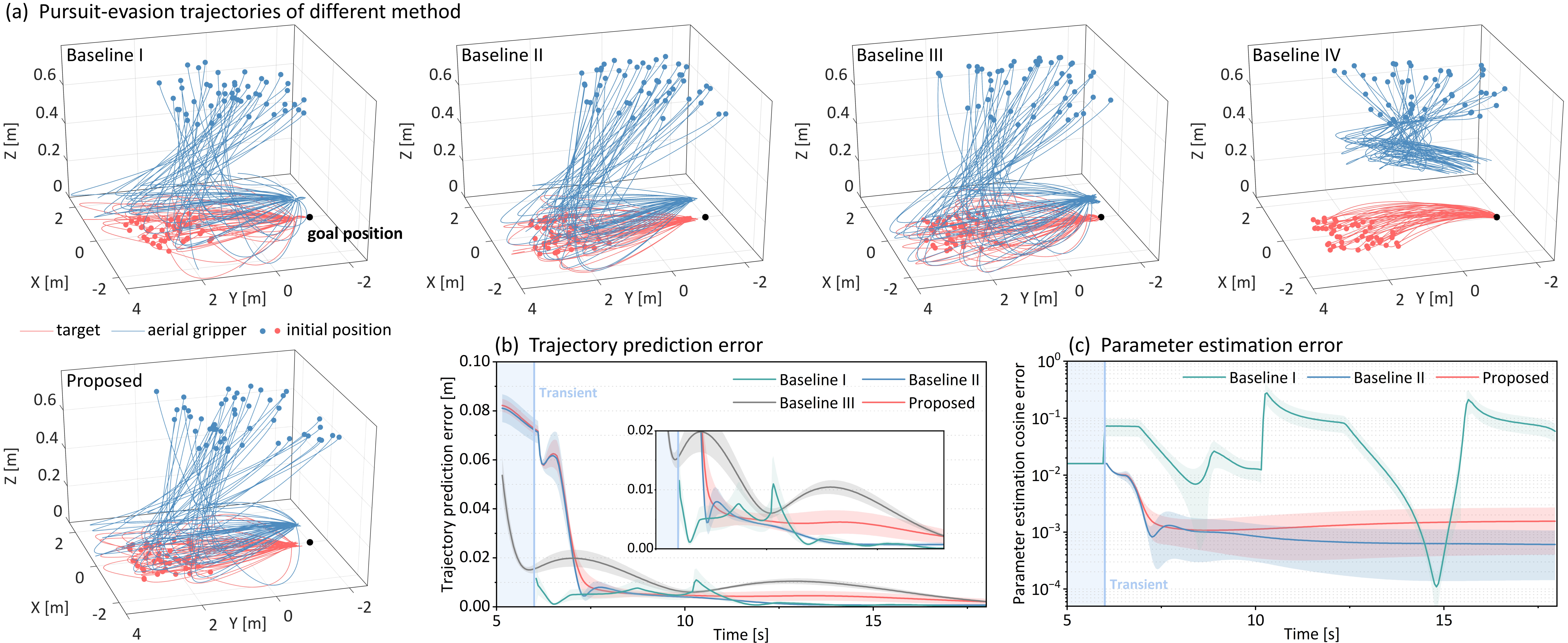}}
		\caption{Comparative simulation results. \textbf{(a)} Pursuit-evasion trajectories of different methods. More red lines that reach the goal position means a lower success rate. \textbf{(b)} Trajectory prediction error. The light blue area indicates the transient stage in which the cost parameters have not been estimated since the construction of initial trajectory data takes time.\textbf{(c)} Parameter estimation error.}
		\label{Comparative Simulation}
	\end{figure*}
	
	To validate the effectiveness and advantage of the proposed GRIPPER framework, comprehensive Monte Carlo simulations are conducted. The initial positions of the aerial gripper and the target, as well as the goal position of the target, are uniformly randomized across broad designated regions. The non-cooperative target is commanded to evade the pursuit of the aerial gripper while advancing toward the goal position $p_{\C{T},\text{goal}}$. The target motion is generated by a separate forward-game planner with full knowledge of the aerial gripper. During the simulations, the average speed of the evasive target was 0.37 m/s, with the maximum speed reaching 1.21 m/s, demonstrating highly dynamic and adversarial interactions.
	
	To isolate the specific benefits of the proposed computationally efficient inverse-game solver, four baselines are evaluated:
	
	\begin{itemize}
		
		\item \textbf{Baseline I (KKT-based Solver)\cite{petersOnlineOfflineLearning2023}}: A KKT-constraint-based inverse-game solver, by which the unknown parameters and the Nash equilibrium are jointly inferred through the KKT conditions.
		
		\item \textbf{Baseline II (Gradient-based Solver \cite{liuLearningPlayTrajectory2023})}: A state-of-the-art gradient-based inverse-game solver that leverages IFT but explicitly computes the Hessian inversion to generate exact gradient.
		
		\item \textbf{Baseline III (NI-MPC)}: A non-interactive Model Predictive Control (NI-MPC) baseline that predicts the target moving at a constant velocity.
		
		\item \textbf{Baseline IV (PID Guidance)}: A purely reactive, non-game-theoretic PID approaching method.
		
	\end{itemize}
	
	The forward game of the proposed method and Baseline II, as well as Baseline I and Baseline III are all solved by an open-source solver CasADi (via Matlab API) on the computer equipped with Intel i5-1035G7 CPU.
	
	Two indices are utilized to quantify the simulation results, including a cosine dissimilarity for parameter estimation performance and mean absolute error for trajectory prediction performance. The parameter estimation error is defined as
	\begin{equation}
		\label{indices_estimation}
		\gamma(\lambda_{\C{T}}, \hat\lambda_{\C{T}}) = 1-\frac{\lambda_{\C{T}}\T \hat\lambda_{\C{T}}}{\Vert\lambda_{\C{T}}\Vert_2\Vert\hat\lambda_{\C{T}}\Vert_2},
	\end{equation}
	and the single-step trajectory prediction error is formulated as
	\begin{equation}
		\label{indices_prediction}
		\Gamma(p_{\C{T}}, p_{\C{T}}^{\text{pred}}) = \frac{1}{\tilde N}\sum_{t=1}^{\tilde N} \Vert p_{\C{T}, t} - p_{\C{T}, t}^{\text{pred}} \Vert,
	\end{equation}
	where $\tilde N$ is the prediction horizon, $p_{\mathcal{T}, t}$ denotes the trajectory sequence of the non-cooperative target, obtained by sampling its actual position with time interval of the system model $\Delta t$.
	
	The comprehensive statistical results including computation time, trajectory prediction error, parameter estimation error, capture success rate, and mean capture time, are summarized in Table \ref{Table_comparative_sim}. The resulting pursuit-evasion trajectories are visualized in Fig. \ref{Comparative Simulation}(a).
	
	\begin{table}[!t]
		\centering
		\renewcommand{\arraystretch}{1.2}
		\caption{Comparative Simulation Results}
		\setlength{\tabcolsep}{4pt}
		\begin{tabular}{ccccccc}
			\hline
			Method & \makecell{Comp. \\time (ms)} & \makecell{Pred. \\err. (mm)} & \makecell{Est. err.\\ ($10^{-3}$)}  & \makecell{Succ. \\rate} & \makecell{Capt. \\time (s)} & Horizon \\ \hline
			\textbf{Proposed}                & \textbf{19.67} & \textbf{8.07} & \textbf{2.07} & \textbf{100\%} & \textbf{9.28} & \textbf{20} \\
			Baseline I     & 78.71         & 2.93          & 58.47         & 96\%           & 10.70         & 20          \\
			Baseline II                   & 28.55          & 7.19          & 1.59          & 100\%          & 9.63          & 20          \\
			Baseline III                  & 15.26          & 11.15         & -             & 58\%           & 12.90         & 20          \\
			Baseline IV                   & $<$ 1          & -             & -             & 0\%            & -             & -           \\ \hline
		\end{tabular}
		\label{Table_comparative_sim}
	\end{table}
	
	\subsubsection{Comparison against inverse-game solvers (Baseline I \& II)} With the same effective prediction horizon of 20 steps, Baseline I achieves a low trajectory prediction error of 2.93 mm and a capture success rate of 96\%. However, the parameter estimation exhibits a relatively large error and noticeable fluctuations, as shown in Fig. \ref{Comparative Simulation}(c). In contrast, the proposed method achieves a low estimation error within 2 s and maintains stable convergence. More critically, the high-dimensional joint optimization problem results in an average single-step computation time of 78.71 ms ($<$ 15 Hz). Such severe computational latency may not be feasible for dynamic aerial capture task.
	
	\begin{remark}
		A counterintuitive result is that Baseline I achieves a low trajectory prediction error despite large parameter estimation error. This occurs because the joint optimization problem directly minimizes the trajectory residual, while recovery of the nominal cost weights is indirect. Moreover, different cost vectors may generate similar short-horizon equilibrium trajectories \cite{liCostInferenceFeedback2023}. Thus, these two errors should not be interpreted as equivalent performance indicators. Nevertheless, accurate cost weight estimation remains valuable for behavioral interpretation and prediction under interactions.
	\end{remark}
	
	Baseline II shares the same underlying implicit differentiation philosophy as the proposed method and achieves almost identical task performance (100\% success rate as well as comparable prediction error and estimation error) as shown in Fig. \ref{Comparative Simulation}(b)-(c). However, since Baseline II explicitly inverts the Hessian matrix, its single-step computation time is noticeably slower than the proposed method. With the aid of our method, the single-step computation time is reduced by 75\% compared to Baseline I and by 31\% compared to Baseline II.
	
	\subsubsection{Comparison against non-game-theoretic methods (Baseline III \& IV)}
	To isolate the indispensable role of game-theoretic reasoning and interactive prediction, GRIPPER is evaluated against two non-game-theoretic baselines. The PID guidance law is formulated as:
	$$
	u_{\mathcal{G}}(t) = K_1 (p_{\mathcal{T}}(t) - p_{\mathcal{G}}(t)) + K_2 (v_{\mathcal{T}}(t) - v_{\mathcal{G}}(t)).
	$$
	where $K_1, K_2 \in \mathbb{R}^{3\times 3}$ are positive diagonal gain matrices.
	
	As depicted in Fig. \ref{Comparative Simulation}(a), Baseline IV fails to accomplish the capture task. Due to the lack of trajectory prediction, the pursuer always lagged behind the evasive maneuvers of the target, failing to secure the required proximity for grasping.
	
	\begin{figure*}[!t]
		\centering{\includegraphics[width=0.85\textwidth]{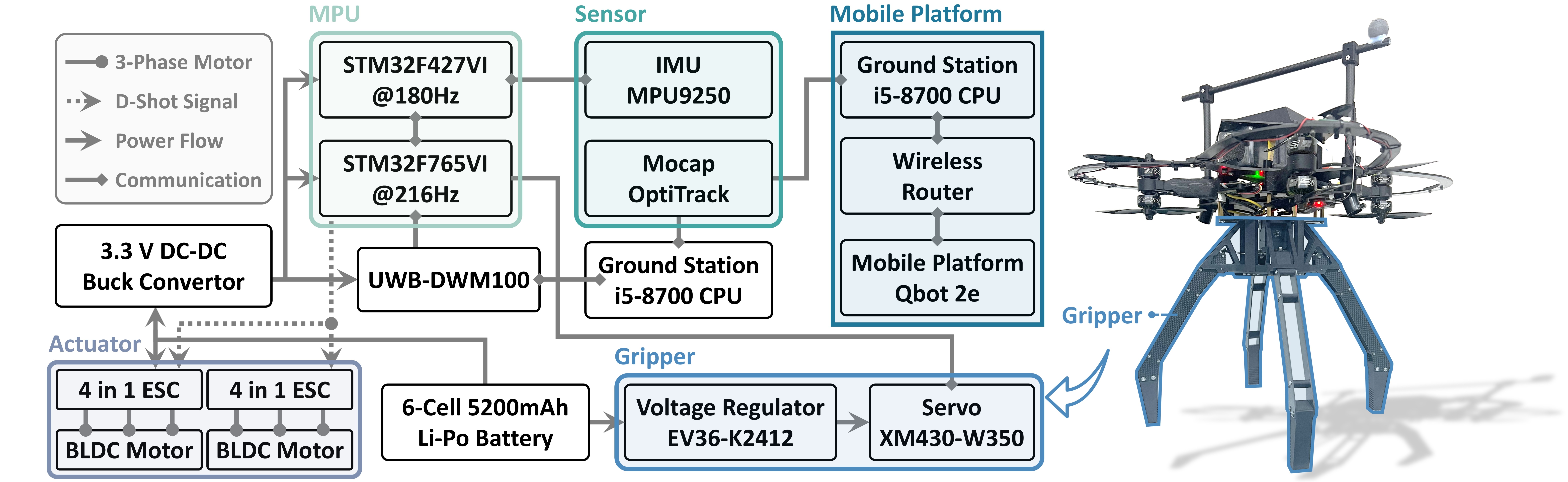}}
		\caption{Hardware structure of the experiment system.}
		\label{Hardware}
	\end{figure*}
	
	Baseline III involves predictive planning but predicts the target moving at constant velocity. While Baseline III exhibits excellent computational speed (15.26 ms), it suffers from severe prediction degradation. As shown in Fig. \ref{Comparative Simulation}(a), the target motion appears approximately straight late in the approach. However, the target trajectory is generated by the forward-game planner and still exhibits nonzero acceleration. During the 2 s preceding the minimum horizontal distance between the aerial gripper and the target, the average target acceleration remains 0.16 m/s\textsuperscript{2}. NI-MPC cannot anticipate these velocity variations, causing prediction errors to accumulate over the horizon. As shown in Fig. \ref{Comparative Simulation}(b), the prediction error is significantly higher than those of the other methods before 15 s. At around 17 s, the target have decelerated and constant-velocity prediction becomes locally more accurate, allowing successful capture in some cases. Consequently, NI-MPC achieves a lower capture success rate of 58\% and a longer mean capture time of 12.90 s.
	
	In summary, simpler tracking methods fail due to a lack of foresight, while naive predictive methods fail due to an inability to anticipate adversarial reactivity. The comprehensive simulations illustrate that the proposed GRIPPER framework achieves superior computational efficiency and remarkable adaptability for non-cooperative target capture tasks. These advantages support real-world deployment and enable proactive, game-driven dynamic aerial capture.
	
	\subsection{Hardware Setup}
	
	As shown in Fig. \ref{Hardware}, this experiment employs two robotic systems: the aerial gripper and the Quanser QBot 2e Mobile Platform as the non-cooperative target.
	
	The autopilot comprises two microprocessors (STM32F4 and STM32F7) and an inertial measurement unit (MPU9250). For accurate localization, the position data is provided by a motion capture system (OptiTrack) with sub-millimeter-level accuracy. The position of the UAV expressed in the world frame is received by the ground station and transmitted to the UAV via an ultra-wideband communication module (DWM100). The attitude is estimated onboard using the IMU, where a complementary filter is implemented on the STM32F4 to fuse accelerometer and gyroscope data. Two 4-in-1 electronic speed controllers (Hobbywing) are used to control the brushless DC motors (Xing 2208 1800KV), with control outputs represented by DShot signals at a frequency of 500 Hz. The gripper is actuated by a DYNAMIXEL XM430-W350 servo, weighing 82g with a stall torque of 4.1 N·m. Power is supplied by a 6S 5200 mAh Li-Po battery, with a voltage regulator to ensure stable operating voltage for the servo actuator.
	
	The software program consists of two portions: control module and planning module. The UAV control algorithm is executed on STM32F7, with translational and rotational controllers running at 100 Hz and 500 Hz, respectively. The codes inside the microcontroller unit (MCU) are developed and debugged on MATLAB with C language. The servo actuator has a built-in PID controller and receives the desired joint angle from the STM32F7. The prediction and planning module, including the game-theoretic solver, is implemented on the ground station equipped with Intel i7-8700 CPU. The forward-game is solved online using CasADi (via Matlab API). Position of the ground vehicle is tracked by the motion capture system, and the velocity is estimated through a Luenberger observer.
	
	The ground vehicle is controlled on a separate ground station that performs control algorithm and sends differential drive signals to the platform through WiFi. A separate forward-game planner is deployed to generate the desired trajectory of the ground vehicle.
	
	\subsection{Experiment I: Trajectory Tracking}
	\label{exp_Trajectory_Tracking}
	
	To evaluate the effectiveness of the proposed anti-disturbance controller, two trajectory tracking tests are conducted. Besides the baseline controller without disturbance compensation (Eq. (\ref{trans_baseline_controller}) and Eq. (\ref{rot_baseline_controller})), a traditional nonlinear DO based controller \cite{chenNonlinearDisturbanceObserver2000} is chosen for comparison as it exhibits decent estimation performance for slowly varying disturbance. The details are provided in Appendix \ref{NDO}.
	
	To quantitatively evaluate the experiment results, three indices (mean absolute error (MAE) $\nu_j$, standard deviation (STD) $\sigma_j$, and maximum error $\kappa_j$) are defined as
	\begin{equation}
		\begin{aligned}
			\label{indices_controller}
			\nu_j & = \frac{1}{n} \sum_{i=1}^{n}\Vert p_j^d(i) - p_j(i) \Vert, \\
			\sigma_j & = \sqrt{\frac{1}{n-1} \sum_{i=1}^{n}\left( \Vert p_j^d(i) - p_j(i) \Vert-\nu_j \right)^2}, \\
			\kappa_j & = \max\limits_{1\leq i\leq n}\left(\Vert p_j^d(i) - p_j(i) \Vert\right),
		\end{aligned}
	\end{equation}
	where $n$ is the length of data set collected during flights. $j \in \{b, e\}$ represents the indices for body frame and the gripper frame, respectively. The operator $\Vert\cdot\Vert$ denotes the standard Euclidean norm for vectors. The quantitative results are listed in Table \ref{Controller_Exp_Table}
	
	\vspace{8pt}
	\noindent \textbf{Test I: Robustness to unknown payload}
	\vspace{8pt}
	
	Test I mainly evaluates the robustness and recovery speed of the proposed controller during the grasping and releasing of objects with unknown mass. In this test, the aerial gripper is commanded to grasp objects with different masses (260g, 560g, and 830g) respectively, maneuver in the desired trajectory and release the object halfway, as illustrated in Fig. \ref{Controller_Exp}(a). A lemniscate trajectory is carried out which is formulated as:
	\begin{equation}
		p_e^d =
		\begin{bmatrix}
			r_x\sin s \\
			r_y\sin (2s)\\
			-z_0 + r_z \left(-\frac{1}{2} + \frac{1}{2} \cos (2s)\right)
		\end{bmatrix},
	\end{equation}
	where $s$ is the trajectory parameter with respect to $t$. To ensure the smoothness of the trajectory, $s$ is designed as the integral of a three-stage speed function (Appendix \ref{design_traj}). In this experiment, the trajectory parameters are set as $r_x=2$ m, $r_y=1$ m, $r_z=0.5$ m, and $z_0=0.5$ m. The desired trajectory of the UAV platform $p_b^d$ is obtained by adopting differential flatness in Section \ref{PathPlan}. The aerial gripper releases the object when $s = 4\pi$.
	
	\begin{table}[!t]
		\centering
		\renewcommand{\arraystretch}{1.2}
		\setlength{\tabcolsep}{4pt}
		\caption{comparison results of three control methods (UNIT:$\mathrm{cm}$)}
		\begin{tabular}{ccccc}
			\hline
			Description   & Method   & $\nu_e$ & $\sigma_e$ & $\kappa_e$ \\ \hline
			\multirow{3}{*}{\begin{tabular}[c]{@{}c@{}}Test I\\ 260g payload\end{tabular}}
			& Proposed & \textbf{3.49}		  & \textbf{1.74}		 & \textbf{12.76}		 \\
			& Baseline & 7.61 $\uparrow$ 54\% & 2.85 $\uparrow$ 39\% & 16.02 $\uparrow$ 20\% \\
			& DO-based & 5.35 $\uparrow$ 35\% & 2.56 $\uparrow$ 32\% & 15.26 $\uparrow$ 16\% \\ \hline
			\multirow{3}{*}{\begin{tabular}[c]{@{}c@{}}Test I\\ 560g payload\end{tabular}}
			& Proposed & \textbf{3.58}		  & \textbf{2.46}		 & \textbf{14.15}		 \\
			& Baseline & 8.84 $\uparrow$ 60\% & 4.91 $\uparrow$ 50\% & 27.07 $\uparrow$ 48\% \\
			& DO-based & 7.03 $\uparrow$ 49\% & 3.71 $\uparrow$ 34\% & 21.12 $\uparrow$ 33\% \\ \hline
			\multirow{3}{*}{\begin{tabular}[c]{@{}c@{}}Test I\\ 830g payload\end{tabular}}
			& Proposed & \textbf{5.35}		  & \textbf{4.24}		 & \textbf{25.81}        \\
			& Baseline &10.19 $\uparrow$ 47\% & 7.26 $\uparrow$ 42\% & 35.94 $\uparrow$ 28\% \\
			& DO-based & 7.79 $\uparrow$ 31\% & 4.59 $\uparrow$ 8\%  & 26.97 $\uparrow$ 4\%  \\ \hline
			\multirow{3}{*}{\begin{tabular}[c]{@{}c@{}}Test II\\ Gripper actuation\end{tabular}}
			& Proposed & \textbf{6.19}		  & \textbf{2.66}		 & \textbf{14.92}        \\
			& Baseline & 9.26 $\uparrow$ 33\% & 3.02 $\uparrow$ 12\% & 19.20 $\uparrow$ 22\% \\
			& DO-based & 8.68 $\uparrow$ 29\% & 3.68 $\uparrow$ 28\% & 34.10 $\uparrow$ 56\% \\ \hline
		\end{tabular}
		\label{Controller_Exp_Table}
	\end{table}
	
	\begin{figure*}[htp]
		\setlength{\abovecaptionskip}{-0.3cm}
		\centering{\includegraphics[width=1\textwidth]{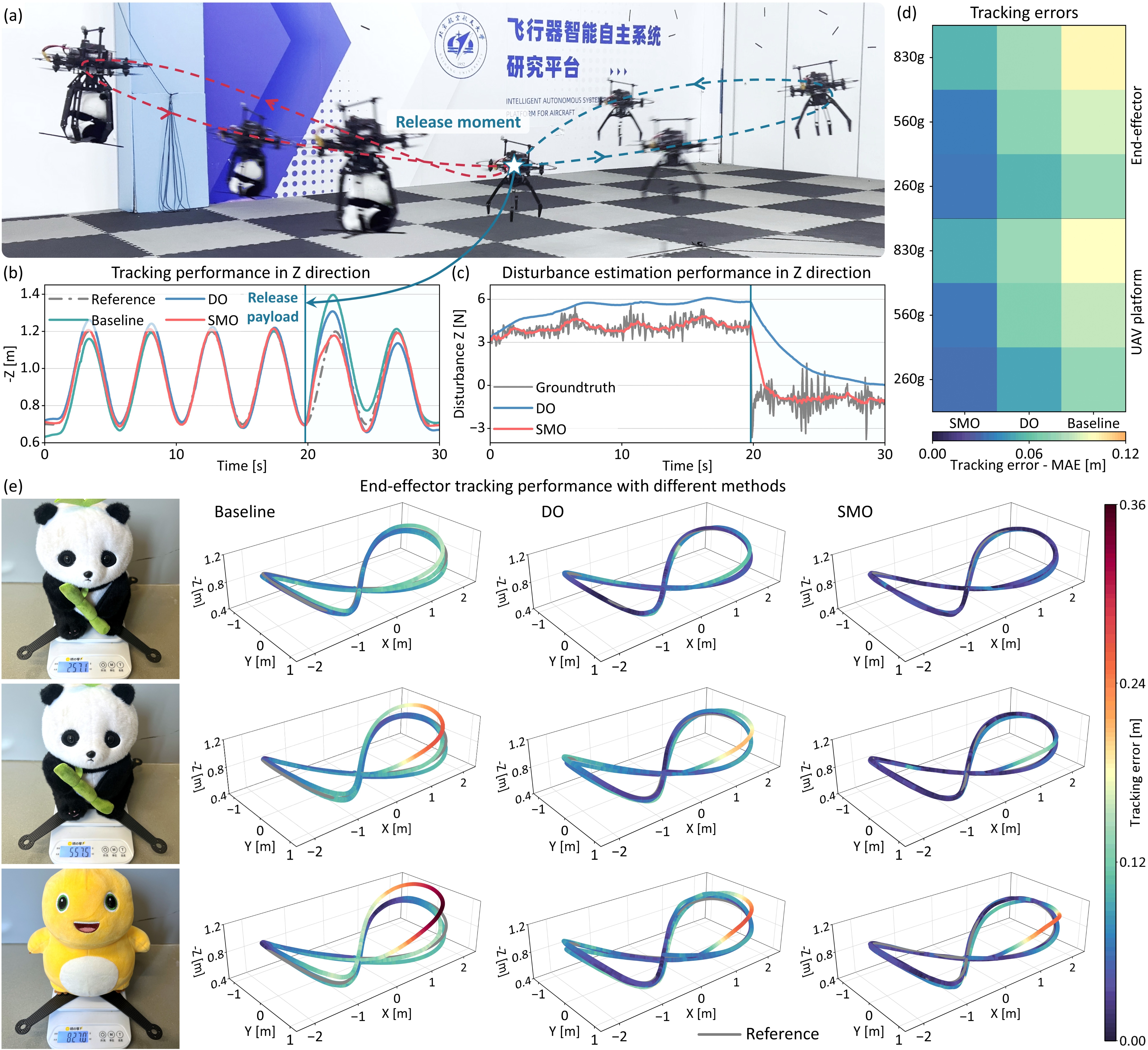}}
		\caption{Experimental results of the aerial gripper's control performance. \textbf{(a)} Details of Test I in Section \ref{exp_Trajectory_Tracking}. \textbf{(b)} Tracking details in the Z-axis direction. The shaded area represents the tracking results after releasing the grasped payload. \textbf{(c)} Disturbance estimation performance of different methods in the Z-axis direction. 'Groundtruth' denotes the truth disturbance calculated by noise-corrupted acceleration measurements offline only for post-analysis. \textbf{(d)} Tracking errors of the UAV and the end-effector under different payloads. \textbf{(e)} Tracking performance under different methods and different payloads. The color bar indicates the tracking error evaluated by standard Euclidean-distance (Eq. (\ref{indices_controller})).}
		\label{Controller_Exp}
	\end{figure*}
	
	\begin{figure*}[htp]
		\setlength{\abovecaptionskip}{-0.3cm}
		\centering{\includegraphics[width=1\textwidth]{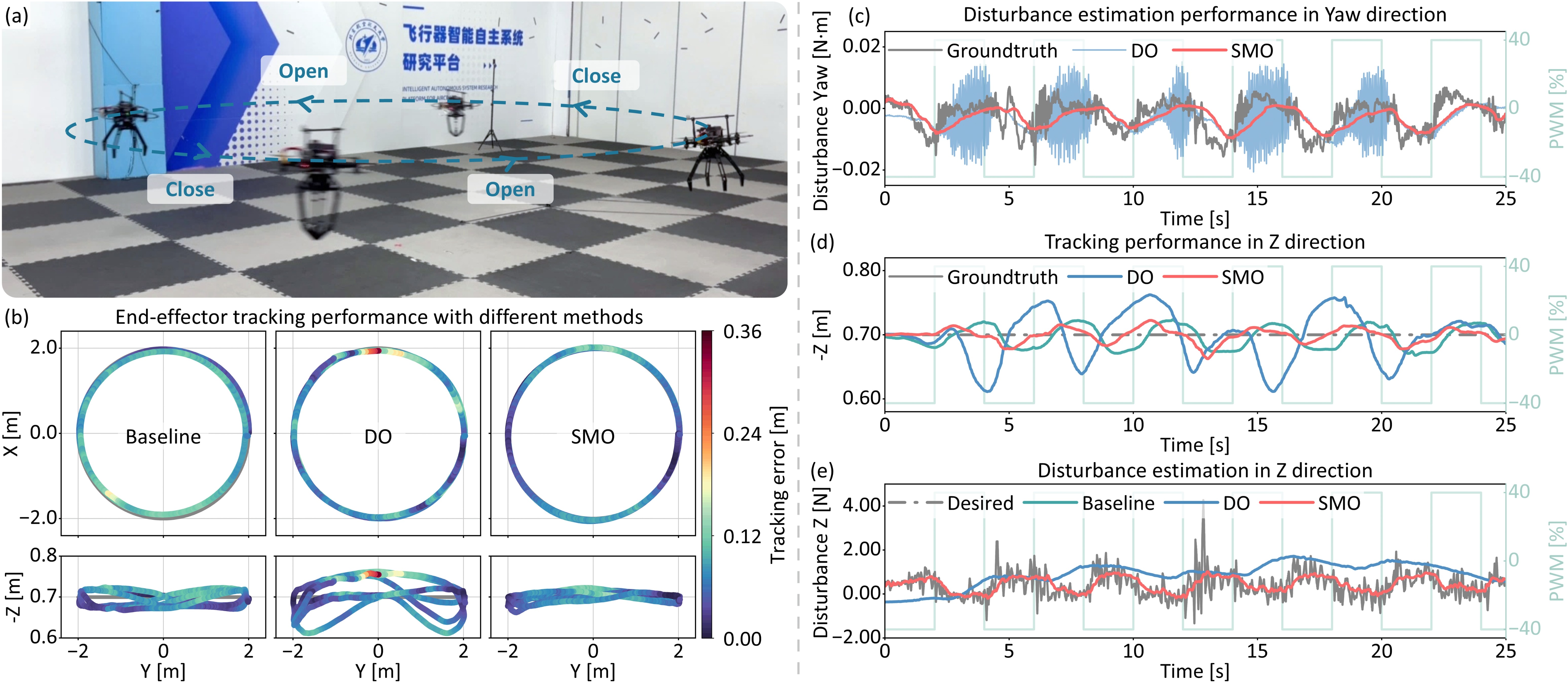}}
		\caption{Experimental results of the aerial gripper's control performance. \textbf{(a)} Details of Test II in Section \ref{exp_Trajectory_Tracking}. \textbf{(b)} Tracking performance under different methods. \textbf{(c)} Torque disturbance estimation performance of different methods in the yaw direction. The right Y-axis represents the PWM signal of the gripper servo. Gripper opens when PWM = -40\% and closes when PWM = 40\%. \textbf{(d)} Tracking details in the Z-axis direction. \textbf{(e)} Force disturbance estimation performance of different methods in the Z-axis direction.}
		\label{Controller_Exp_test2}
	\end{figure*}
	
	As shown in Fig. \ref{Controller_Exp}(c), accurate estimation of disturbances caused by unknown payloads is achieved by proposed control scheme, leading to the lowest tracking error (Fig. \ref{Controller_Exp}(b),(d),(e)). Notably, the estimation and tracking performance reveals robust at the moment of payload release. As illustrated in Fig. \ref{Controller_Exp}(d),(e) and Table \ref{Controller_Exp_Table}, the control performance of all control schemes deteriorates to varying extents as the payload mass increases. However, under the disturbances of all given payloads, the proposed SMO-based controller consistently demonstrates superior tracking accuracy compared to the alternatives. Moreover, as illustrated in Fig. \ref{Controller_Exp}(b),(d), the proposed controller achieves faster recovery in trajectory tracking after the payload release. Experimental results highlight the robustness and superiority of the proposed control scheme in handling disturbances caused by payload with unknown mass.
	
	\vspace{8pt}
	\noindent \textbf{Test II: Robustness to gripper actuation}
	\vspace{8pt}
	
	In the following test, the disturbances caused by gripper actuation is considered. The aerial gripper is commanded to repeatedly open and close its gripper while tracking a circular trajectory, as depicted in Fig. \ref{Controller_Exp_test2}(a). The trajectory is formulated as $p_e^d = \begin{bmatrix} r \sin s & r \cos s &  -z_1 \end{bmatrix}\T$, where $s$ is the trajectory parameter with respect to $t$, and in the experiment $r=2$ m, $z_0=0.7$ m.
	
	It can be seen from Fig. \ref{Controller_Exp_test2}(b)-(e) that the gripper actuation mainly induces force and torque disturbances along z-axis. This is primarily due to the torque disturbances generated by the actuation of the vertically installed servo. Additionally, the gripper actuation induces COM and moment of inertia shift along z-axis, while the impact on moment of inertia along x and y axes remains minor. This phenomenon is consistent with previous analysis. Notably, as illustrated in Fig. \ref{Controller_Exp_test2}(b),(d), the trajectory tracking performance of the aerial gripper deteriorates when disturbance estimation and compensation are performed using traditional DO method. Moreover, as shown in Fig. \ref{Controller_Exp_test2}(c),(e), the DO-based controller fails to provide accurate disturbance estimation in this experiment and exhibits significant oscillations in yaw direction. Traditional DO method is not well-suited for handling such kind of disturbances. In contrast, the proposed SMO-based controller demonstrates superior performance, as evidenced by the significantly reduced tracking error shown in Table \ref{Controller_Exp_Table}.
	
	Consequently, from the experiments in both tests, it can be concluded that accurate and robust trajectory tracking performance is achieved by adopting the proposed SMO-based control scheme. Prompt disturbance rejection is guaranteed, thereby providing a solid control foundation for successful capture of non-cooperative targets.
	
	\subsection{Experiment II: Non-Cooperative Target Capture}
	\label{exp_Target_Capture}
	
	Finally, in an attempt to further explore the capability of the proposed scheme in achieving non-cooperative target capture tasks and high-precision trajectory prediction, a real-world experiment is carried out. The experimental setup is almost the same as the comparative simulation. The initial position of the aerial gripper and the non-cooperative target is set to $\begin{bmatrix} -1.5,0,-0.7 \end{bmatrix} \mathrm{m}$ and $\begin{bmatrix} 0,-1.5,-0.3 \end{bmatrix} \mathrm{m}$, respectively. The scenario is depicted in Fig. \ref{Game_Exp}(a). After grasping, the aerial gripper is commanded to deliver the target to the specific position ($\begin{bmatrix} -1,2,0 \end{bmatrix} \mathrm{m}$) and release it. Unlike in simulation, the inverse-game parameters require additional fine-tuning in real-world experiments. The final parameter settings are summarized in Table \ref{GameParameters}. 
	
	This experimental protocol allows for the assessment of comprehensive abilities of the aerial gripper, including computational efficiency, parameter estimation, trajectory prediction, path planning, and control accuracy. A set of 5 experiments is conducted, out of which the aerial gripper successfully completes the task 5 times. The key snaps together with an experimental video are illustrated in Fig. \ref{Scenario1}(b). As shown in Fig. \ref{Game_Exp}(c), the non-cooperative target is successfully captured, demonstrating satisfactory performance in both planning and control. The target maximum speed reaches 0.25 m/s during the pursuit-evasion game. Furthermore, as illustrated in Fig. \ref{Game_Exp}(e), due to the superior performance of the proposed controller, the aerial gripper maintains reliable control during the post-capture delivery phase. This capability is of practical significance for ensuring reliable execution of subsequent tasks after capture of non-cooperative target.
	
	Notably, as shown in Fig. \ref{Game_Exp}(c), around $t = 12$ s, the aerial gripper is in front of the non-cooperative target. This is partly attributed to the trajectory prediction capability of the game-theoretic planner. On the other hand, the ground vehicle fails to reach the expected position on time due to limitations in its controller performance. However, since the inverse-game-theoretic module estimates unknown parameters based on the actual trajectory of the target, this mismatch can be compensated to some extent. Consequently, the planner adapts in a short time and enables successful capture.
	
	The estimation performance of the unknown cost weight vector could directly influence the accuracy of trajectory prediction and the performance of pursuit trajectory planning. As shown in Fig. \ref{Game_Exp}(b), the estimation converges within 4 s in the experiment. This duration accounts for the necessary initial data buffering phase and the observations required to smooth out real-world measurement noise.
	
	In terms of trajectory prediction error, two baselines are taken for comparison: a forward game solver without parameter estimation and a forward game solver using the ground truth target weights, which are referred to as "No estimation" and "Groundtruth" in Fig. \ref{Game_Exp}(d), respectively. The GRIPPER framework initially exhibits a significant prediction error but rapidly achieves a prediction accuracy comparable to the "Groundtruth" as the parameter estimation converges, outperforming the non-adaptive baseline. GRIPPER can rapidly enhance its prediction capability by effectively leveraging information from the non-cooperative target.
	
	Notice that in real-world experiment, the target trajectory actually deviates from the theoretical Nash equilibrium due to control errors, actuator noise, and real-world disturbances. Despite this mismatch, the proposed method can identify meaningful cost parameters and allow effective pursuit. Therefore, the framework is robust to moderate deviations from the assumed model structure.
	
	Due to the limited communication bandwidth between the ground station and the UAV, the game-theoretic planner is configured to update at a fixed frequency of 20 Hz. This frequency is sufficient for the aerial gripper to ensure smooth trajectory tracking. In conclusion, aiming at dynamic capture of non-cooperative target, the proposed incomplete information dynamic game solver meets the real-time requirements of the aerial gripper system.
	
	\begin{figure*}[!t]
		\centering{\includegraphics[width=0.95\textwidth]{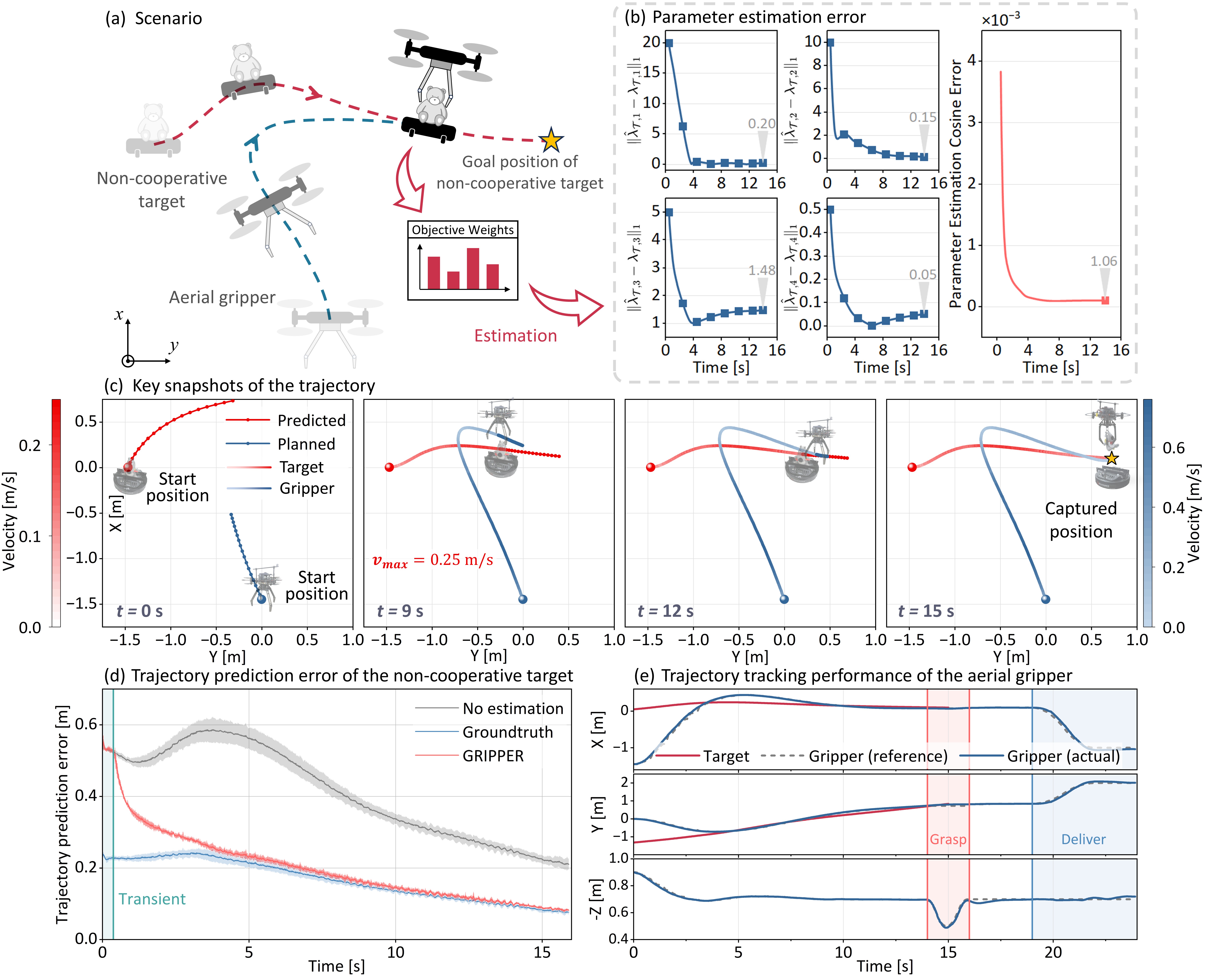}}
		\caption{Experimental results of non-cooperative target capture task. \textbf{(a)} The scenario of Experiment II in Section \ref{exp_Target_Capture}. \textbf{(b)} Estimation performance of unknown cost weight vector $\lambda_{\C{T}}$. \textbf{(c)} Key snapshots of the trajectory in non-cooperative target capture task. \textbf{(d)} Trajectory prediction error of the non-cooperative target. The green area indicates the transient stage in which the cost parameters have not been estimated since the construction of initial trajectory data takes time. \textbf{(e)} Trajectory of the target and the aerial gripper.}
		\label{Game_Exp}
	\end{figure*}
	
	\begin{table}[!t]
		\centering
		\renewcommand{\arraystretch}{1.2}
		\caption{Parameters of GRIPPER}
		\begin{tabular}{ccc}
			\hline
			Description  & Parameter               & Value                               \\ \hline
			Cost weight vector
			& $\lambda_{\C{T}}$       & $[30,10,1]\T$                       \\
			& $\lambda_{\C{G}}$       & $[100,10,10,1]\T$                   \\ \hline
			Initial guess of $\lambda_{\C{T}}$                                                        
			& $\hat\lambda_{\C{T},0}$ & $[120,20,5,0.5]\T$                  \\ \hline
			\multirow{2}{*}{Learning rate}                                                            
			& $\alpha$                & $0.2\times \mathbf{I}_{3N\times 3N}$    \\
			& $\beta$                 & $\mathrm{diag}([4,0.8,0.05,0.001])$ \\ \hline
		\end{tabular}
	\label{GameParameters}
\end{table}

\subsection{Grasping Execution and Conditions for Successful Capture}

As described in Section III-B, the four fingers are actuated by a servo through four symmetric linkages. In the experiments, the opening and closing of the gripper are directly controlled by preset constant PWM commands sent from the STM32F7 microprocessors to the servo.

During approach, the gripper remains open and a small vertical offset is maintained to avoid unintended contact with the target. Once the horizontal distance between the gripper center and the target falls below the capture tolerance, a predefined descent-closure-ascent sequence is triggered. During descent, the closing PWM signal is applied so that the four fingers enclose the target. The closing signal is then maintained while the UAV ascends with the captured target.

A successful dynamic aerial capture is governed by both algorithmic prerequisites and geometric constraints. The receding-horizon solver requires the target state to be observable at a rate exceeding the online planning frequency. However, there are no strict deterministic numerical thresholds for prediction accuracy or inference quality that guarantee a capture. Ultimately, a reasonable condition for a successful capture is dictated by the geometric tolerance of the physical hardware and the target size.

The custom aerial manipulator features a 4-finger gripper. Due to the physical gaps between the fingers, if the horizontal alignment error exceeds a specific threshold, the target will slip through the side gaps during closure. Assume that the target is a rigid cylinder of radius $r_{\C{T}}$, and the horizontal distance between the gripper center and the target center is $d$. It can be deduced from the geometric relationship that the condition $d < r_{\C{T}}$ is supposed to be strictly satisfied for a successful capture at the moment of closure.

Therefore, the performance of our framework lies in its ability to guide the aerial gripper into this r-radius neighborhood before the target escapes. In our simulations and real-world experiments, this geometric threshold is set as 0.05 m.

\section{Conclusion}
\label{sec:Conclusion}

In this work, a gradient-based real-time inverse-game prediction and planning framework GRIPPER is presented for the aerial gripper to achieve non-cooperative target capture task. The interaction between the aerial gripper and the target is modeled as a general-sum PE game under incomplete information. Real-time intent inference and trajectory prediction are achieved by efficiently solving a bilevel optimization problem with a gradient-based method. Furthermore, a fixed-time SMO-based controller is developed to guarantee robust tracking performance under gripper-induced and payload-related disturbances. The proposed control strategy enables fast recovery and stable trajectory tracking, which is essential for successful grasping in highly dynamic settings.

Extensive simulations and hardware experiments have been conducted to validate the proposed framework. GRIPPER demonstrates computational efficiency and inference accuracy. The capacity of the aerial gripper equipped with GRIPPER and the anti-disturbance controller has been well verified by the real-world non-cooperative target capture task.

\section{Discussion and Future Work}
\label{sec:discussion}

Despite the promising results of the proposed GRIPPER framework, there exist certain theoretical boundaries that present promising directions for future research. From a game-theoretic perspective, the current framework isolates incomplete information to the pursuer by modeling the target as a fully informed adversary. Mutual inference may provide a more comprehensive representation of competitive settings. Furthermore, the framework relies on predefined basis functions and assumes constant objective parameters, which implies that the target behavior is sufficiently predictable. The receding-horizon implementation exhibits practical robustness to moderate structural misspecifications and slow parameter variations. However, the system may face challenges with abrupt evasive maneuvers, sudden strategic shifts, and control delays. In future work, more uncertainty-aware inference mechanisms and learning-based approaches will be investigated for more assumption-free cost representations.

Additionally, extending the framework to jointly infer the unknown dynamics remains a compelling objective to better handle fully unknown adversaries. In our current system, the dynamics model is treated as a known prior to ensure strict objective identifiability. Overcoming these challenges will be a key focus for our future research in less time-critical application scenarios.

	\section*{Appendix}
	\label{App}
	
	\renewcommand{\thesubsection}{\Alph{subsection}}
	\renewcommand{\theequation}{A\arabic{equation}}
	\setcounter{equation}{0}
	\renewcommand{\thelemma}{A\arabic{lemma}}
	\setcounter{lemma}{0}
	\renewcommand{\theHequation}{appendix.A.\arabic{equation}}
	
	\subsection{Stability Analysis of Translational Controller}
	\label{App_A}
	
	\subsubsection{Convergence Analysis of the Fixed-Time SMO}

	\begin{lemma}
		Consider the fixed-time SMO in Eq. (\ref{trans_SMO_controller}). $\hat d_f$ can uniformly converge to $d_f$ in fixed time, under Assumption \ref{assump_derivative} and the conditions
		\begin{equation}
			\label{condition1}
			l_1 > \sqrt{2l_3},l_2>0,l_3>4m^{-1}\rho_1.
		\end{equation}
	\end{lemma}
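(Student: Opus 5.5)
Our approach is to reduce the observer to the standard error dynamics of a vector super-twisting-type algorithm with uniform-convergence (linear/quadratic) terms, and then invoke the fixed-time result of \cite{yuDesignFixedtimeObserver2018}.

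\textbf{Step 1: error dynamics.} From (\ref{trans_dynamics}) we have $\dot v_b = gw_3 - \frac{f_b}{m}b_3 + m^{-1}d_f$. Subtracting this from the first line of (\ref{trans_SMO_controller}) gives
\begin{equation*}
\dot z_1 = \varepsilon_1 - m^{-1}d_f .
\end{equation*}
Introduce $z_2 = \varepsilon_2 - m^{-1}d_f$, so that $\hat d_f - d_f = m z_2$. The system becomes
\begin{equation*}
\dot z_1 = -l_1\frac{z_1}{\Vert z_1\Vert^{1/2}} - l_2 z_1\Vert z_1\Vert + z_2,\qquad
\dot z_2 = -l_3\frac{z_1}{\Vert z_1\Vert} - m^{-1}\dot d_f .
\end{equation*}
By Assumption \ref{assump_derivative}, the perturbation satisfies $\Vert m^{-1}\dot d_f\Vert\le m^{-1}\rho_1$. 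This is exactly the uniformly convergent super-twisting structure, and it is independent of the controller, because $f_b$ cancels. Convergence of the observer therefore holds regardless of the closed-loop behaviour.

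\textbf{Step 2: fixed-time Lyapunov argument.} We will adopt the strict Lyapunov function of \cite{yuDesignFixedtimeObserver2018}, which is quadratic in the vector
\begin{equation*}
\zeta = \left[\,\Vert z_1\Vert^{-1/2}z_1 + \Vert z_1\Vert z_1,\; z_2\,\right],
\end{equation*}
that is, $V=\zeta^\top P\zeta$. The argument splits into two regimes.
\begin{enumerate}
\item Far from the origin, the $l_2 z_1\Vert z_1\Vert$ term dominates. One shows $\dot V\le -c_1 V^{3/2}$ (or more generally $-c V^{p}$ with $p>1$), which bounds the time to reach a fixed level set independently of the initial condition.
\item Near the origin, the super-twisting terms give $\dot V\le -c_2 V^{1/2}$, i.e. finite-time convergence with a time bounded on that level set.
\end{enumerate}
Summing the two bounds yields a settling time $T\le \frac{2}{c_2}V_0^{*1/2}+\frac{2}{c_1}V_0^{*-1/2}$ that does not depend on $z(0)$, which is the definition of fixed-time (uniform) convergence.

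\textbf{Main obstacle.} The hard step is verifying negative definiteness of $\dot V$ under the bounded perturbation $m^{-1}\dot d_f$, and doing so in the vector (not scalar) setting. There the terms $z_1/\Vert z_1\Vert$ do not commute as neatly as the scalar sign function does. My plan is to avoid re-deriving the matrix inequality: I would reduce to the scalar-style analysis by noting that the vector dynamics keep the structure where $z_1$ and $z_2$ align, and then appeal directly to the conditions in \cite{yuDesignFixedtimeObserver2018}. With the choice $l_1>\sqrt{2l_3}$ and $l_3>4m^{-1}\rho_1$, the Lyapunov matrix inequality is feasible with the perturbation absorbed, and $l_2>0$ supplies the uniformity term. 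The quantitative role of the condition is that $l_3$ must dominate the perturbation bound with margin, namely $l_3 > 4\cdot m^{-1}\rho_1$.

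Once $z_2\equiv0$ for $t\ge T$, we obtain $\hat d_f = d_f$, completing the proof.
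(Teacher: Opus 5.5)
Your Step~1 is exactly the paper's. The paper defines $y_1=\varepsilon_2-m^{-1}d_f$ (your $z_2$, a name the paper already uses for the rotational observer) and obtains the same error system, with perturbation $\phi_1=-m^{-1}\dot d_f$ and $\Vert\phi_1\Vert\le m^{-1}\rho_1$. It then simply invokes the multivariable fixed-time result of \cite{basinMultivariableContinuousFixedtime2017}, which is tailored to this vector structure and gives the explicit bound $t_1$.

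The gap is in your Step~2, which would not go through as sketched. First, nothing keeps $z_1$ and $z_2$ aligned: initial conditions are arbitrary and $m^{-1}\dot d_f$ can point in any direction. The reduction to a scalar analysis is therefore unjustified; handling the genuinely vector case is exactly what the cited multivariable result is for. Second, a quadratic function of $\zeta=[\psi(z_1),z_2]$ with $\psi(z_1)=\Vert z_1\Vert^{-1/2}z_1+\Vert z_1\Vert z_1$ is a strict Lyapunov function only when the integral channel is matched to the first one. That requires $\dot z_2$ to contain $-(\partial\psi/\partial z_1)\,\psi(z_1)$, which has terms growing like $\Vert z_1\Vert^{3/2}$ and $\Vert z_1\Vert^{3}$. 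Here $\dot\varepsilon_2=-l_3 z_1/\Vert z_1\Vert$ has only the discontinuous term, so the quasi-linear form of $\dot\zeta$ that argument rests on is absent.

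In fact the far-regime inequality you need cannot hold for this observer. Since $\Vert\dot z_2\Vert\le l_3+m^{-1}\rho_1=M_1$, you get $\Vert z_2(t)\Vert\ge\Vert z_2(0)\Vert-M_1t$. Hence no $V$ growing like $\Vert z_2\Vert^2$ can satisfy $\dot V\le -c_1V^{3/2}$ globally, and no settling time can be independent of $z_2(0)$. The uniformity bought by the $l_2$ term is with respect to $z_1(0)$; the initial integral-channel error must be confined to a bounded set (e.g.\ through $\varepsilon_2(0)$). Your proof, and the imported bound, should state this explicitly.

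This matches the two-phase structure of the estimate the paper uses:
\begin{itemize}
\item a first phase of length $t_2=\frac{1}{l_2\delta_1}+\frac{2(\sqrt{3}\delta_1)^{1/2}}{l_1}$;
\item an integral-channel error of order $M_1t_2$ accumulated during that phase (the $\Upsilon_1$ used in the proof of Theorem~\ref{theorem_trans});
\item a multivariable super-twisting phase whose length scales with $M_1/(M_2(1-\sqrt{2l_3}/l_1))$, where $M_2=l_3-m^{-1}\rho_1$.
\end{itemize}
That last factor is where $l_1>\sqrt{2l_3}$ and the requirement that $l_3$ dominate $m^{-1}\rho_1$ actually enter. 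Your claim that these conditions make a Lyapunov matrix inequality feasible is unsupported.

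To repair the argument, drop the URED-type Lyapunov sketch and the alignment reduction. Either cite \cite{basinMultivariableContinuousFixedtime2017} directly, as the paper does, or reproduce its two-phase time estimate with the bounded-initial-$z_2$ premise made explicit.
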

	
	\begin{proof}
		With the definition $y_1 = \varepsilon_2 - \frac{1}{m}d_f$, the error dynamics of the observer can be obtained by taking derivative of $z_1$ and $y_1$ as
		\begin{equation}
			\left\{
			\begin{aligned}
				& \dot z_1 = -l_1\frac{z_1}{\Vert z_1 \Vert^{1/2}} - l_2 z_1 \Vert z_1 \Vert + y_1 \\
				& \dot y_1 = -l_3\frac{z_1}{\Vert z_1 \Vert} + \phi_1
			\end{aligned}
			\right.,
		\end{equation}
		where $\phi_1 = - \frac{1}{m} \dot d_f$. According to Assumption \ref{assump_derivative}, it holds that $\Vert \phi_1 \Vert \le m^{-1}\rho_1$.
		
		On the basis of results in \cite{basinMultivariableContinuousFixedtime2017} and the conditions Eq. (\ref{condition1}), $z_1$ and $y_1$ can uniformly converge to the origin within fixed time as
		\begin{equation}
			t_1 \le \left( \frac{1}{l_2 \delta_1} + \frac{2(\sqrt{3} \delta_1)^{1/2}}{l_1} \right) \left( 1 + \frac{M_1}{M_2(1-\sqrt{2l_3}/l_1)} \right),
		\end{equation}
		where $\delta_1 > 0$, $M_1 = l_3 + m^{-1}\rho_1$, and $M_2 = l_3 - m^{-1}\rho_1$. 
		
	\end{proof}

	\subsubsection{Stability Analysis of the Closed-loop System}
	
	\begin{proof}[Proof of Theorem \ref{theorem_trans}]
		By combining Eq. (\ref{trans_controller}) and Eq. (\ref{trans_dynamics}), the closed-loop error dynamics is obtained as
		\begin{equation}
			\dot e = \underbrace{\begin{bmatrix}
					\mathbf{0}_{3\times 3} & \mathbf{I}_{3\times 3} \\ -K_p & -K_v\end{bmatrix}}_{A_e}e 
			+ \underbrace{\begin{bmatrix}
					\mathbf{0}_{3\times 3} & \mathbf{0}_{3\times 3} \\ 
					\mathbf{0}_{3\times 3} & \frac{1}{m}\mathbf{I}_{3\times 3} \end{bmatrix}}_{B_e}
			\begin{bmatrix} \bm{0}_3 \\ \tilde d_f \end{bmatrix},
		\end{equation}
		where $e = \begin{bmatrix} e_p\T & e_v\T \end{bmatrix}\T$ and $\tilde{d}_f = \hat{d}_f - d_f$ represents the estimation error of the disturbance $d_f$. It can be checked that the eigenvalues of $A_e$ all have negative real parts as long as $K_p$ and $K_v$ have positive diagonal elements. As a consequent, there must exist a positive definite symmetric matrix $P\in \mathbb{R}^{6\times 6}$ that meets $A_e\T P + PA_e = -I$. Consider the following Lyapunov function
		\begin{equation}
			V_1 = e\T P e.
		\end{equation}
		Taking derivative of $V_1$ yields
		\begin{equation}
			\label{Lyapunov1}
			\begin{aligned}
				\dot V_1 & =  \dot e\T Pe + e\T P\dot e \\
				& = e\T(PA_e+A_e\T P)e + 2e\T PB_e \begin{bmatrix} \bm{0}_3 \\ \tilde d_f \end{bmatrix}.
			\end{aligned}
		\end{equation}
		By resorting to Young's inequality, it can be verified that
		\begin{equation}
			e\T PB_e \begin{bmatrix} \bm{0}_3 \\ \tilde d_f \end{bmatrix}
			\le \frac{1}{2\varsigma}\lambda_{max}^2(PB_e)e\T e + \frac{\varsigma}{2}\tilde{d}\T_f \tilde{d}_f,
		\end{equation}
		where $\varsigma$ is an arbitrary positive constant and $\lambda_{max}(\cdot)$ represents the maximum eigenvalue of a matrix. Thus, Eq. (\ref{Lyapunov1}) can be thereby written as
		\begin{equation}
			\dot V_1 \le -e\T e + \frac{1}{\varsigma}\lambda_{max}^2(PB_e)e\T e + \varsigma\tilde{d}\T_f \tilde{d}_f.
		\end{equation}
		Given that $e\T e \le -\frac{e\T P e}{\lambda_{max}(P)}$, one can obtain that
		\begin{equation}
			\dot V_1 \le - \eta_1 e\T P e  + \varsigma\tilde{d}\T_f \tilde{d}_f,
		\end{equation}
		where $\eta_1 = \frac{\varsigma - \lambda_{max}^2(PB_e)}{\varsigma\lambda_{max}(W)}$. The condition $\varsigma > \lambda_{max}^2(PB_e)$ holding can ensure $\eta_1$ positive.
		
		According to the convergence analysis of the observer in \cite{basinMultivariableContinuousFixedtime2017}, the convergence of $\tilde{d}_f$ is independent of the stability of the closed-loop system error $e$. Thus, according to the results in \cite{basinMultivariableContinuousFixedtime2017}, $\Vert y_1 \Vert$ is bounded by
		\begin{equation}
			\begin{aligned}
				& \Vert y_1(t) \Vert < \Upsilon_1 = M_1 \left( \frac{1}{l_2 \delta_1} + \frac{2(\sqrt{3} \delta_1)^{1/2}}{l_1} \right),\\
				& t > t_2 = \left( \frac{1}{l_2 \delta_1} + \frac{2(\sqrt{3} \delta_1)^{1/2}}{l_1} \right).
			\end{aligned}
		\end{equation}
		Given that $ y_1 = m^{-1}\tilde{d}_f$, one can obtained that
		\begin{equation}
			\dot V_1(t) < - \eta_1 V_1 + \varsigma m^2\Upsilon_1^2, \quad t > t_2,
		\end{equation}
		which implies that
		\begin{equation}
			\label{Lyapunov1_final}
			0 \le V_1(t) < \left(V_1(t_2) - \frac{\varsigma m^2\Upsilon_1^2}{\eta_1}\right)e^{-\eta_1 (t-t_2)} + \frac{\varsigma m^2\Upsilon_1^2}{\eta_1}.
		\end{equation}
		
		Finally, according to Eq. (\ref{Lyapunov1_final}), all signals in the translational loop are globally uniformly bounded.

	\end{proof}

	\subsection{Stability Analysis of Rotational Controller}
	\label{App_B}
	
	\subsubsection{Convergence Analysis of the Fixed-Time SMO}
	
	\begin{lemma}
		Consider the fixed-time SMO in Eq. (\ref{rot_SMO_controller}). $\hat d_f$ can uniformly converge to $d_f$ in fixed time, under Assumption \ref{assump_derivative} and the conditions
		\begin{equation}
			\label{condition2}
			l_4 > \sqrt{2l_6},l_5>0,l_6>4\Vert J^{-1}\Vert\rho_2.
		\end{equation}
	\end{lemma}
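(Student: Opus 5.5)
I will mirror the proof of the translational observer lemma in Appendix \ref{App_A}, since the rotational SMO in Eq. (\ref{rot_SMO_controller}) has exactly the same structure. (Although the statement writes $\hat d_f$ and $d_f$, for this observer the relevant quantities are $\hat d_{\tau}$ and $d_{\tau}$.) The plan is to derive the observer error dynamics in the standard second-order fixed-time form and then invoke the multivariable fixed-time convergence result of \cite{basinMultivariableContinuousFixedtime2017}.

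\textbf{Step 1: error dynamics.} Define $z_2 = \mu_2 - \omega_b$ and $y_2 = \varepsilon_4 - J^{-1} d_{\tau}$. From Eq. (\ref{rot_dynamics}) we have $\dot\omega_b = -J^{-1}\omega_b^{\times}J\omega_b + J^{-1}\tau_b + J^{-1}d_{\tau}$. Subtracting this from $\dot\mu_2$ in Eq. (\ref{rot_SMO_controller}), the gyroscopic terms $-J^{-1}\omega_b^{\times}J\omega_b$ and the known input $J^{-1}\tau_b$ cancel. This leaves
\begin{equation*}
\dot z_2 = -l_4\frac{z_2}{\Vert z_2\Vert^{1/2}} - l_5 z_2\Vert z_2\Vert + y_2,\qquad
\dot y_2 = -l_6\frac{z_2}{\Vert z_2\Vert} + \phi_2,
\end{equation*}
with $\phi_2 = -J^{-1}\dot d_{\tau}$. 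Because $J$ is constant, Assumption \ref{assump_derivative} gives $\Vert\phi_2\Vert \le \Vert J^{-1}\Vert\rho_2$. The cancellation holds exactly, since $\tau_b$ and $\omega_b$ are measured signals. Consequently the error dynamics do not depend on the closed-loop attitude errors, which is the decoupling later used in the proof of Theorem \ref{theorem_rot}.

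\textbf{Step 2: apply the fixed-time result.} The system above is exactly the form treated in \cite{basinMultivariableContinuousFixedtime2017}, with perturbation bound $\Vert J^{-1}\Vert\rho_2$ replacing $m^{-1}\rho_1$. Under the gain conditions (\ref{condition2}), namely $l_4>\sqrt{2l_6}$, $l_5>0$ and $l_6>4\Vert J^{-1}\Vert\rho_2$, that result yields uniform convergence of $(z_2,y_2)$ to the origin within a fixed time independent of the initial conditions:
\begin{equation*}
t_3 \le \left(\frac{1}{l_5\delta_2} + \frac{2(\sqrt{3}\delta_2)^{1/2}}{l_4}\right)\left(1 + \frac{M_3}{M_4(1-\sqrt{2l_6}/l_4)}\right).
\end{equation*}
Here $\delta_2>0$, $M_3 = l_6 + \Vert J^{-1}\Vert\rho_2$ and $M_4 = l_6 - \Vert J^{-1}\Vert\rho_2$. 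The condition on $l_6$ guarantees $M_4>0$.

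\textbf{Step 3: conclude.} Since $\hat d_{\tau} = J\varepsilon_4$, we have $\hat d_{\tau} - d_{\tau} = J y_2$. Hence $\hat d_{\tau}\to d_{\tau}$ uniformly within the fixed time $t_3$.

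The only delicate point is checking that the hypotheses of \cite{basinMultivariableContinuousFixedtime2017} match. The perturbation bound must be stated through the induced norm $\Vert J^{-1}\Vert$, because a direction-dependent matrix gain does not reduce to a scalar. The factor $4$ in $l_6>4\Vert J^{-1}\Vert\rho_2$ should be read as the sufficient margin inherited from that reference, exactly as in the translational case.
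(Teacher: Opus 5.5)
Your proposal is correct and follows essentially the same route as the paper. You define $y_2=\varepsilon_4-J^{-1}d_{\tau}$, derive the perturbed multivariable super-twisting error dynamics with $\Vert\phi_2\Vert\le\Vert J^{-1}\Vert\rho_2$, and invoke the fixed-time result of Basin et al.\ to obtain the same bound $t_3$. Writing $d_{\tau}$ where the paper's $y_2$ and $\phi_2$ use $d_f$ simply corrects a notational slip in the original.
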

	\begin{proof}
		With the definition $y_2 = \varepsilon_4 - J^{-1}d_f$, the error dynamics of the observer can be obtained by taking derivative of $z_1$ and $y_1$ as
		\begin{equation}
			\left\{
			\begin{aligned}
				& \dot z_2 = -l_4\frac{z_2}{\Vert z_2 \Vert^{1/2}} - l_5 z_2 \Vert z_2 \Vert + y_2 \\
				& \dot y_2 = -l_6\frac{z_2}{\Vert z_2 \Vert} + \phi_2
			\end{aligned}
			\right.,
		\end{equation}
		where $\phi_2 = - J^{-1} \dot d_f$. According to Assumption \ref{assump_derivative}, it holds that $\Vert \phi_2 \Vert \le \Vert J^{-1}\Vert \rho_2$.
		
		On the basis of results in \cite{basinMultivariableContinuousFixedtime2017} and the conditions Eq. (\ref{condition2}), $z_2$ and $y_2$ can uniformly converge to the origin within fixed time as
		\begin{equation}
			t_3 \le \left( \frac{1}{l_5 \delta_2} + \frac{2(\sqrt{3} \delta_2)^{1/2}}{l_4} \right) \left( 1 + \frac{M_3}{M_4(1-\sqrt{2l_6}/l_4)} \right),
		\end{equation}
		where $\delta_2 > 0$, $M_3 = l_6 + \Vert J^{-1}\Vert\rho_2$, and $M_4 = l_6 - \Vert J^{-1}\Vert\rho_2$. 
		
	\end{proof}

	\subsubsection{Stability Analysis of the Closed-loop System}
	
	\begin{proof}[Proof of Theorem \ref{theorem_rot}]
		The derivative of $e_\omega$ is given as
		\begin{equation}
			\dot e_\omega = - \omega_b^{\times} \bm{R}_{b}\T \bm{R}_{b}^{d} \omega_b^d + \bm{R}_{b}\T \bm{R}_{b}^{d} \dot \omega_b^d - \dot \omega_b.
		\end{equation}
		By combining Eq. (\ref{rot_controller}) and Eq. (\ref{rot_dynamics}), the closed-loop error dynamics is obtained as
		\begin{equation}
			\label{omega_error_dynamics}
			J \dot e_\omega = - K_R e_R - K_\omega e_\omega + \tilde{d}_\tau.
		\end{equation}
		
		By resorting to the property of skew-symmetric operator, the derivative of $e_R$ can be obtained as
		\begin{equation}
			\begin{aligned}
				\dot e_R & = \frac{1}{2}\left( tr(\bm{R}_{b}\T \bm{R}_{b}^{d})\mathbf{I}_{3\times 3} - \bm{R}_{b}\T \bm{R}_{b}^{d} \right) e_\omega \\
				& = \Psi(\bm{R}_{b}\T \bm{R}_{b}^{d})e_\omega
			\end{aligned}
		\end{equation}
		Based on Rodrigues' formula \cite{bulloGeometricControlMechanical2005}, $\Vert \Psi(\bm{R}_{b}\T \bm{R}_{b}^{d}) \Vert \le 1$ for any $\bm{R}_{b}\T \bm{R}_{b}^{d} \in SO(3)$.
		
		By taking derivative of $\Theta \left(\bm{R}_b, \bm{R}_b^d\right)$, the attitude error dynamic is expressed as
		\begin{equation}
			\label{attitude_error_dynamics}
			\dot \Theta \left(\bm{R}_b, \bm{R}_b^d\right) = e_\omega\T e_R.
		\end{equation}
		
		Consider the following Lyapunov function
		\begin{equation}
			V_2 = \frac{1}{2}e_\omega\T J e_\omega + K_R \Theta \left(\bm{R}_b, \bm{R}_b^d\right) + ce_R\T e_\omega,
		\end{equation}
		where $c$ is a non-negative constant. From Eqs. (\ref{omega_error_dynamics}-\ref{attitude_error_dynamics}), the time derivative of $V_2$ is given as
		\begin{equation}
			\label{Lyapunov2}
			\begin{aligned}
				\dot V_2 & = e_\omega\T J \dot e_\omega + K_R \dot \Theta \left(\bm{R}_b, \bm{R}_b^d\right) + c(\dot e_R\T e_\omega + e_R\T \dot e_\omega) \\
				& = -K_\omega e_\omega\T e_\omega + c e_\omega\T \Psi\T(\bm{R}_{b}\T \bm{R}_{b}^{d}) e_\omega  \\
				& \quad + ce_R\T J^{-1}(-K_R e_R - K_\omega e_\omega) \\
				& \quad + e_\omega\T \tilde{d}_\tau + ce_R\T J^{-1}\tilde{d}_\tau\\
				& = \C{V} + e_\omega\T \tilde{d}_\tau + ce_R\T J^{-1}\tilde{d}_\tau.
			\end{aligned}
		\end{equation}
		
		Leveraging the methodology delineated in \cite{jiaFlightControlQuadrotor2022}, one can obtain that
		\begin{equation}
			\C{V} \le -\Lambda\T Q_1 \Lambda,
		\end{equation}
		where $\Lambda = \begin{bmatrix} \Vert e_R \Vert & \Vert e_\omega \Vert \end{bmatrix}\T$, and $Q_1$ is
		\begin{equation}
			Q_1 = \begin{bmatrix}
				\frac{cK_R}{\lambda_{max}(J)} & -\frac{cK_\omega}{2\lambda_{min}(J)} \\
				-\frac{cK_R}{2\lambda_{min}(J)} & K_\omega - c
			\end{bmatrix}
		\end{equation}
		where $\lambda_{min}$ represents the minimum eigenvalue of a matrix.
		
		By accounting for the conditions Eq. (\ref{theorem2_1}) and Eq. (\ref{theorem2_2}), it can be obtained that
		\begin{equation}
			\begin{gathered}
				\Lambda\T \underline{Q_1} \Lambda \le V_2 \le \Lambda\T \overline{Q_1} \Lambda, \\
				\underline{Q_1} = \frac{1}{2}\begin{bmatrix}
					K_R & -c \\ -c & \lambda_{min}(J)
				\end{bmatrix}, \\
				\overline{Q_1} = \frac{1}{2}\begin{bmatrix}
					\frac{2K_R}{2-\underline{V_2}(0)/K_R} & c \\ c & \lambda_{max}(J)
				\end{bmatrix},
			\end{gathered}
		\end{equation}
		where $\underline{V_2} = V_2|_{c=0}=\frac{1}{2}e_\omega\T J e_\omega + K_R \Theta \left(\bm{R}_b, \bm{R}_b^d\right)$ and satisfies $0 \le \underline{V_2}(0)/K_R < 2$. To make sure that $Q_1$, $\underline{Q_1}$, and $\overline{Q_1}$ are all positive definite matrices, $c$ can be selected as
		\begin{equation}
			c\le \min\{ K_\omega, \sqrt{K_R\lambda_{min}(J)}, \frac{4K_\omega K_R \lambda_{min}^2(J)}{K_\omega^2 \lambda_{max}(J)+4K_R\lambda_{min}^2(J)} \}.
		\end{equation}
		
		Using Young's inequality, it can be achieved that
		\begin{equation}
			\begin{gathered}
				e_\omega\T \tilde{d}_\tau \le e_\omega\T e_\omega + \frac{1}{4}\tilde{d}_\tau\T\tilde{d}_\tau, \\
				e_R\T J^{-1}\tilde{d}_\tau \le\frac{1}{\lambda_{max}(J)}e_R\T e_R + \frac{\lambda_{max}(J)}{4\lambda_{min}^2(J)}\tilde{d}_\tau\T\tilde{d}_\tau.
			\end{gathered}
		\end{equation}
		
		Thus, Eq. (\ref{Lyapunov2}) can be thereby written as
		\begin{equation}
			\begin{gathered}
				\dot V_2 \le -\Lambda\T Q_2 \Lambda + \varpi \tilde{d}_\tau\T\tilde{d}_\tau, \\
				Q_2 = \begin{bmatrix}
					c\frac{K_R-1}{\lambda_{max}(J)} & -\frac{cK_\omega}{2\lambda_{min}(J)} \\
					-\frac{cK_R}{2\lambda_{min}(J)} & K_\omega - c - 1
				\end{bmatrix}, \\
				\varpi = \frac{\lambda_{min}^2(J) + c\lambda_{max}(J)}{4\lambda_{min}^2(J)}.
			\end{gathered}
		\end{equation}
		
		Furthermore, in attempt to make sure that $Q_2$ is a positive definite matrix, $c$ can be chosen as
		\begin{equation}
			\begin{aligned}
				c \le & \min\{ K_\omega-1, \sqrt{K_R\lambda_{min}(J)}, \\
				& \frac{4(K_\omega-1) (K_R-1) \lambda_{min}^2(J)}{K_\omega^2 \lambda_{max}(J)+4(K_R-1)\lambda_{min}^2(J)} \}.
			\end{aligned}
		\end{equation}
		
		According to the convergence analysis of the observer in \cite{basinMultivariableContinuousFixedtime2017}, the convergence of $\tilde{d}_\tau$ is independent of the stability of the closed-loop system error $e_R$ and $e_\omega$. Thus, according to the results in \cite{basinMultivariableContinuousFixedtime2017}, $\Vert y_2 \Vert$ is bounded by
		\begin{equation}
			\begin{aligned}
				& \Vert y_2(t) \Vert < \Upsilon_2 = M_3 \left( \frac{1}{l_5 \delta_2} + \frac{2(\sqrt{3} \delta_2)^{1/2}}{l_4} \right),\\
				& t > t_4 = \left( \frac{1}{l_5 \delta_2} + \frac{2(\sqrt{3} \delta_2)^{1/2}}{l_4} \right).
			\end{aligned}
		\end{equation}
		Given that $ y_2 = J^{-1}\tilde{d}_f$, one can obtained that
		\begin{equation}
			\dot V_2(t) < - \eta_2 V_2 + \Xi, \quad t > t_4,
		\end{equation}
		where $\eta_2 = \frac{\lambda_{min}(Q_2)}{\lambda_{max}(\overline{Q_1})}$ and $\Xi = \varpi \lambda_{max}^2(J) \Upsilon_2^2$. Therefore, it can be given that
		\begin{equation}
			\label{Lyapunov2_final}
			0 \le V_2(t) < \left(V_2(t_4) - \frac{\Xi}{\eta_2}\right)e^{-\eta_2 (t-t_4)} + \frac{\Xi}{\eta_2}.
		\end{equation}
		
		Finally, according to Eq. (\ref{Lyapunov2_final}), all signals in the rotational loop are globally uniformly bounded.
		
	\end{proof}

	\subsection{Nonlinear Disturbance Observer}
	\label{NDO}
	
	In Experiment I (Section \ref{exp_Trajectory_Tracking}), as a performance measure, a classical nonlinear disturbance observer is taken for comparison. The translational DO is formulated as
	\begin{equation}
		\left\{	
		\begin{aligned}
			& \dot \zeta_1 = -L_1 \zeta_1 - L_1(L_1 v_b + gw_3 - \frac{f_b}{m} b_3) \\
			& \hat d_f = m(\zeta_1 + L_1 v_b) \\
		\end{aligned}
		\right.,
	\end{equation}
	where $\zeta_1 \in \mathbb{R}^3$ is an auxiliary variable, $L_1 \in \mathbb{R}^{3\times 3}$ is positive diagonal gain matrix.
	
	The rotational DO is formulated as
	\begin{equation}
		\left\{	
		\begin{aligned}
			& \dot \zeta_2 = -L_2 \zeta_2 - L_2( L_2 \omega_b - J^{-1}\omega_b^{\times}J\omega_b + J^{-1}\tau_b) \\
			& \hat d_{\tau} = J(\zeta_2 + L_2 \omega_b)
		\end{aligned}
		\right.,
	\end{equation}
	where $\zeta_2 \in \mathbb{R}^3$ is an auxiliary variable, $L_2 \in \mathbb{R}^{3\times 3}$ is positive diagonal gain matrix.
	
	The baseline controller is consistent with the proposed control scheme.

	\subsection{Design of Desired Trajectory}
	\label{design_traj}
	
	In Experiment I (Section \ref{exp_Trajectory_Tracking}), to ensure trajectory smoothness, the parameter $s$ is designed as the integral of a three-stage speed function
	\begin{equation}
		\dot{s}(t) =
		\begin{cases}
			v_{\max} \cdot q(\tfrac{t}{t_a}), & t \in [0, t_a] \\
			v_{\max}, & t \in [t_a, T - t_a] \\
			v_{\max} \cdot q(\tfrac{T-t}{t_a}), & t \in [T - t_a, T]
		\end{cases}
		,
	\end{equation}
	where $q(x) = 10x^3 - 15x^4 + 6x^5$ is a smooth polynomial function ensuring zero acceleration at boundaries, $v_{\max} \in \mathbb{R}$ is the maximum speed, $t_a$ is the acceleration duration, and $T$ is the total duration.

	\bibliography{ref}
	\bibliographystyle{IEEEtran}

\end{document}